\newcommand{\divergence}{\textnormal{div}\,}
\newcommand{\bbX}{\mathbb{X}}
\newcommand{\bbY}{\mathbb{Y}}
\newcommand{\bbZ}{\mathbb{Z}}
\newcommand{\RR}{\mathbb{R}}
\newcommand{\cX}{\mathcal{X}}
\newcommand{\cY}{\mathcal{Y}}
\newcommand{\cZ}{\mathcal{Z}}
\newcommand{\cL}{\mathcal{L}}
\newcommand{\dom}{\textnormal{dom}\,}
\newcommand{\nnnl}{\nonumber \\}
\newcommand{\prox}{\textnormal{Prox}}
\newcommand{\ee}{\mathbf{e}}
\newcommand{\citep}{\cite}
\newcommand{\citet}{\cite}
\newtheorem{theorem}{Theorem}
\newtheorem{lemma}{Lemma}
\DeclareMathOperator*{\argmax}{arg\,max}
\DeclareMathOperator*{\argmin}{arg\,min}
\title{Increasing iterate averaging for solving saddle-point problems\\}
\author{
	Yuan Gao \\
Department of IEOR, Columbia University \\
New York, NY, 10027 \\
\texttt{gao.yuan@columbia.edu} \\
\And
Christian Kroer \\
Department of IEOR, Columbia University \\
New York, NY, 10027 \\
\texttt{christian.kroer@columbia.edu}
\And
Donald Goldrarb \\
Department of IEOR, Columbia University \\
New York, NY, 10027 \\
\texttt{goldfarb@columbia.edu}
}
\begin{document}
	\maketitle
	\begin{abstract} 
		Many problems in machine learning and game theory can be formulated as saddle-point problems, for which various first-order methods have been developed and proven efficient in practice. Under the general convex-concave assumption, most first-order methods only guarantee an ergodic convergence rate, that is, the uniform averages of the iterates converge at a $O(1/T)$ rate in terms of the saddle-point residual. However, numerically, the iterates themselves can often converge much faster than the uniform averages. This observation motivates increasing averaging schemes that put more weight on later iterates, in contrast to the usual uniform averaging. We show that such increasing averaging schemes, applied to various first-order methods, are able to preserve the $O(1/T)$ convergence rate with no additional assumptions or computational overhead. Extensive numerical experiments on zero-sum game solving, market equilibrium computation and image denoising demonstrate the effectiveness of the proposed schemes. In particular, the increasing averages consistently outperform the uniform averages in all test problems by orders of magnitude. When solving matrix and extensive-form games, increasing averages consistently outperform the last iterates as well. For matrix games, a first-order method equipped with increasing averaging outperforms the highly competitive CFR$^+$ algorithm.
	\end{abstract}
	\section{Introduction} \label{sec:intro} \vspace{-2px}
	Consider saddle point problems of the form
	\begin{align}
	\min_{x\in \bbX } \max_{y\in \bbY} \cL(x,y) \label{eq:spp-begin-general}
	\end{align}
	where $\cL$ is a general convex-concave function and $\bbX$, $\bbY$ are Euclidean spaces. For any $(x,y)\in \bbX\times \bbY$, denote its \textit{saddle-point residual} (SPR) as
	\begin{align*}
	\epsilon_{\rm sad}(x,y) = \max_{y'\in \bbY} \cL(x, y') - \min_{x'\in \bbX} \cL(x',y). 
	\end{align*}
	Many problems in machine learning \cite{juditsky2011first, chambolle2016ergodic}, imaging \cite{chambolle2011first} and game theory \cite{koller1996efficient, kroer2018faster} can be formulated as \eqref{eq:spp-begin-general}. First-order methods (FOMs) are naturally suitable for these problems and have been proven efficient in practice. For example, \citet{chambolle2011first} gives an algorithm for solving saddle-point problems involving bilinear and separable, nonsmooth terms and demonstrate its effectiveness in imaging applications. \citet{kroer2018solving} uses the Excessive Gap Technique (EGT)~\citep{nesterov2005excessive}, with a specific distance-generating function, to solve saddle-point formulation of zero-sum extensive-form games (EFG).
	Given the general convex-concave structure without strong convexity and smoothness assumptions, these algorithms only guarantee \textit{ergodic} convergence rate, that is, $\epsilon_{\rm sad}\left( \frac{1}{T}\sum_{t=1}^T x^t, \frac{1}{T} \sum_{t=1}^T y^t \right) = O(1/T)$. Meanwhile, numerically, $(x^t, y^t)$ (the ``last iterates'') often converge much more rapidly (see, e.g., \cite[\S 7.2.2]{chambolle2016ergodic}). This observation motivates new averaging schemes that put more weight on later iterates rather than uniformly across all of them. Let $(x^t, y^t)$, $t = 1, 2, \dots $ denote the iterates generated by a first-order method (more specifically, the iterates used in forming the convergent uniform averages; for certain algorithms, they are not necessarily denoted as $(x^t,y^t)$; see, e.g., Theorem \ref{thm:relax-pd-weighted}). Let $w_t$ be positive, nondecreasing weights. We consider averages of the form
	\begin{align}
	\bar{x}^T = \frac{1}{S_T} \sum_{t=1}^T w_t x^t,\ \  \bar{y}^T = \frac{1}{S_T} \sum_{t=1}^T w_t y^t, \ {\rm where}\ S_T = \sum_{t=1}^T w_t. \label{eq:ave-iter-def}
	\end{align}
	For example, $w_t = 1, t, t^2,$ and $ t^3$ result in \textit{uniform}, \textit{linear}, \textit{quadratic} and \textit{cubic} averages, respectively. We refer to such choices of positive, nondecreasing $w_t$ as \textit{increasing iterate averaging schemes} (IIAS) and the resulting $\bar{x}^T, \bar{y}^T$ as \textit{increasing averages}. In fact, in solving extensive-form games, the highly successful CFR$^+$ algorithm uses a form of linear averaging \cite{tammelin2015solving}. Similar averaging techniques have also been used in other scenarios, such as algorithms for solving large-scale sequential games that achieve superhuman performance in poker~\citep{bowling2015heads, moravvcik2017deepstack, brown2018superhuman}, and efficient large-scale GAN training \cite{yazici2018unusual}. On the theory side, \cite{golowich2020last} shows that, for unconstrained smooth saddle-point problems, the last iterates of Extragradient, a primal-dual FOM, converges slower than the averages in a strict sense. Through a unified analysis, \cite{davis2016convergence} shows $O(1/\sqrt{T})$ ergordic and last-iterate convergence rates of general splitting schemes, which apply to the special case of the general standard form \eqref{eq:sp-fgh*} with $f=0$. To the best of our knowledge, no theoretical justification has been given for the practical speedup of last iterates as compared to uniform averages, as well as the potentially harder problem of justifying the speedup from increasing iterate averaging.
	
	We show that for a wide class of FOMs, IIAS produces averages that converges at a rate $O(1/T)$ in terms of SPR. Algorithms compatible with IIAS include the (vanilla) primal-dual algorithm (PDA)~\cite{chambolle2011first}, its relaxed version (RPDA), inertial version (IPDA)~\citep{chambolle2016ergodic}, and linesearch version (PDAL)~\citep{malitsky2018first}, as well as Mirror Descent (MD) \cite{nemirovsky1983problem, beck2003mirror, ben2019lectures} and Mirror Prox (MP) \cite{nemirovski2004prox, ben2019lectures}. For most of the algorithms, in order to preserve the convergence of $(\bar{x}^T, \bar{y}^T)$, it suffices to choose $w_t = t^q$ for some weight exponent $q\geq 0$, completely independent of the problem instance and the algorithm. For algorithms with inertial terms or linesearch subroutine, in order to ensure the theoretical convergence rate, $w_t$ needs to satisfy additional inequalities, which makes $w_t$ depend on previous iterations. Still, simple formulas suffice (e.g., Theorem \ref{thm:inertial-pd-weighted} and \ref{thm:pdal}). Finally, we emphasize that for all first-order methods considered here, IIAS does \textit{not} alter the execution of the original algorithm. In other words, the performance boost is achieved without any extra computation or memory - we simply replace the uniform averages by increasingly weighted ones. Meanwhile, the averaging weights $w_t$, the sum of weights $\sum_{t=1}^T w_t$ and the averages $(\bar{x}^T, \bar{y}^T)$ can all be updated incrementally along the way.
	
	\textbf{Summary of contributions } First, we provide easily implementable IIAS for a variety of FOMs and establish their respective convergence properties. The high-level idea of the analysis can be summarized as follows. For each of the first-order methods, in the proof of the $O(1/T)$ rate of convergence, we identify the \textit{critical inequality} being summed across all time steps to derive the final rate. We then take a weighted sum instead where the weights are the increasing averaging weights $w_t$. Then, through telescoping the summation, we bound the right hand side by $O\left(w_T/S_T\right)$, which is $O(1/T)$ (with an extra constant $(q+1)$ compared to the original ones under uniform averaging) as long as $w_t$ grows \textit{polynomially}, that is, $w_t>0$, nondecreasing, and $\frac{w_{t+1}}{w_t} \leq \frac{(t+1)^q}{t^q}$ for all $t$, for some $q\geq 0$. Second, we perform extensive numerical experiments on various first-order methods and saddle-point point problems to demonstrate the consistent, strong performance gain of IIAS. Test problems include matrix games of different sizes and generative distributions, extensive-form games, Fisher market equilibirum computation and the TV-$\ell_1$ image denoising model. As the results demonstrate, increasing averages consistently outperform uniform averages in all setups by orders of magnitude. When solving matrix and extensive-form games, increasing averages consistently outperform the last iterates as well. For matrix games, PDA and RPDA equipped with IIAS also outperforms the highly competitive CFR$^+$ algorithm. For EFGs, RPDA under \textit{static, theoretically safe} hyperparameters equipped with quadratic averaging outperforms EGT with unsafe, sophisticated, adaptive stepsizing.
	
	\textbf{Organization } Section \ref{sec:pda-vanilla} presents the increasing averaging generalization of the primal-dual algorithm (PDA) of \cite{chambolle2016ergodic} and its analysis in detail. Section \ref{sec:pd-extensions} presents similar generalizations for the relaxed, inertial and linesearch variants of PDA. Section \ref{sec:mirror-algo} discusses similar generalizations for Mirror Prox and Mirror Descent. Section \ref{sec:experiments} presents numerical experiment setups and results. 
	
	\section{The primal-dual algorithm} \label{sec:pda-vanilla} \vspace{-2px}
	\textbf{Problem setup and notation } We follow the setup in \cite{chambolle2016ergodic}. Let $\bbX$ and $\mathbb{Y}$ be real Euclidean spaces with norms $\|\cdot \|_{\bbX}$ and $\|\cdot\|_{\mathbb{Y}}$, respectively. Denote the dual space of $\bbX$ as $\bbX^*$. Its corresponding dual norm, for any $x^* \in \bbX^*$, is defined as $\|x^*\|_{\bbX,*} = \sup_{\|x\| = 1} \langle x^*, x\rangle$. Define  $\mathbb{Y}^*$ and $\|y^*\|_{\mathbb{Y}, *}$ similarly. The subscripts on the norms are dropped when there is no ambiguity. Let $K: \bbX \rightarrow \bbY^*$ be a bounded linear operator. and $K^*: \bbY \rightarrow X^*$ be its adjoint operator. The (operator) norm of $K$ is defined as $\|K\| = \sup_{\|x\|\leq 1,\, \|y\|\leq 1} \langle Kx, y \rangle$.
	Let $\psi_{\bbX}$ and $\psi_{\bbY}$ be $1$-strongly convex (w.r.t. to their respective norms) smooth functions (known as distance-generating functions, DGF). Let $D_{\bbX}$ and $D_{\bbY}$ be their respective Bregman divergence functions, that is, for $\mathbb{V} = \bbX, \bbY$, $v,v'
	\in \mathbb{V}$,
	\begin{align}
	D_{\mathbb{V}}(v', v) := \psi_{\mathbb{V}}(v') - \psi_{\mathbb{V}}(v) - \langle \nabla \psi_{\mathbb{V}}(v), v' - v \rangle. \label{eq:breg-dist-from-dgf}
	\end{align}
	Let $f$ be a proper lower-semicontinuous (l$.$s$.$c$.$) convex function whose gradient $\nabla f$ is $L_f$-Lipschitz continuous w.r.t. $\|\cdot \|_\bbX$. Let $g, h$ be  proper l$.$s$.$c$.$ convex functions whose proximal maps $\prox_{\tau g}(x)= \argmin_u \left\{ \tau g(u) + D_{\bbX}(x, u) \right\}$ and $\prox_{\sigma h^*}(y) = \argmin_v \left\{ \sigma h^*(v) + D_{\bbY}(y, v) \right\}$, $\tau, \sigma>0$ can be easily computed. In addition, assume $\dom g \subseteq \dom \psi_{\bbX}$ and $\dom h^* \subseteq \dom \psi_{\bbY}$. Define the matrix $M_{\tau, \sigma} = \begin{bmatrix}
	\frac{1}{\tau} I & -K^* \\ -K & \frac{1}{\sigma} I 
	\end{bmatrix}$, which is positive definite (semidefinite) as long as $\tau, \sigma>0$ and $\tau \sigma L^2 < 1$ ($\leq 1$). With the above setup, consider the saddle-point problem (SPP)
	\begin{align}
	\min_{x\in \bbX}\max_{y\in \mathbb{Y}}\, \cL(x, y):= \langle K x, y \rangle +f(x)+g(x) - h^*(y). \label{eq:sp-fgh*}
	\end{align} 
	For $(\bar{x}, \bar{y}) \in \bbX\times \bbY$, $(\tilde{x}, \tilde{y}) \in \bbX\times \bbY$, the generic primal-dual iteration $(\hat{x}, \hat{y}) = PD_{\tau, \sigma}(\bar{x}, \bar{y}, \tilde{x}, \tilde{y})$ is
	\begin{align*}
	\hat{x} = \argmin_x \left\{
	\begin{matrix}
	f(\bar{x}) + \langle \nabla f (\bar{x}), x - \bar{x} \rangle + g(x)  \\ + \langle K x, \tilde{y}\rangle + \frac{1}{\tau} D_{\bbX}(x, \bar{x}) 
	\end{matrix} \right\},\ \ \hat{y} = \argmin_y \left\{ h^*(y) - \langle K \tilde{x}, y\rangle + \frac{1}{\sigma}D_{\bbY}(y, \bar{y}) \right\}.
	\end{align*}
	Here the update can be \textit{asymmetric}: if an algorithm uses $PD$ to generate iterates $(x^t, y^t)$, then $y^{t+1}$ may depend on $x^{t+1}$, which then depends on $(x^t, y^t)$. \citet{chambolle2016ergodic} proposes a primal-dual algorithm (PDA), which is listed here as Algorithm \ref{alg:pda}. Theorem 1 in their paper shows that the uniform averages converge at $O(1/T)$ in SPR. The key step in the proof is Lemma 1 there, which we restated below.
	\begin{lemma}\label{lem:pd-lemma}
		Assume $(\hat{x}, \hat{{y}}) = PD_{\tau, \sigma}(\bar{x}, \bar{y}, \tilde{x}, \tilde{y})$ for some $\tau, \sigma > 0$. Then, for any $(x,y) \in \bbX \times \bbY$, one has
		\begin{align*}
		& \cL(\hat{x}, y) - \cL(x, \hat{y})  \leq \frac{1}{\tau} \left( D_{\bbX}(x, \bar{x}) - D_{\bbX}(x, \hat{x}) - D_{\bbX}(\hat{x}, \bar{x})\right) +\frac{L_f}{2}\|\hat{x}-\bar{x}\|^2 \nnnl
		&\quad  + \frac{1}{\sigma}\left( D_{\bbY}(y, \bar{y}) - D_{\bbY}(y, \tilde{y}) - D_{\bbY}(\hat{y}, \bar{y}) \right) + \langle K(x-\hat{x}), \tilde{y} - \hat{y} \rangle  - \langle K(\tilde{x}-\hat{x}), y - \hat{y} \rangle.
		\end{align*}
	\end{lemma}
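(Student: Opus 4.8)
The plan is to run the standard primal--dual descent argument, extracting one inequality from each of the two prox subproblems and reassembling them into the coupling form. First I would write the first-order optimality condition for the primal update: since $\hat{x}$ minimizes a convex objective, there is a subgradient $\xi \in \partial g(\hat{x})$ with $\nabla f(\bar{x}) + K^*\tilde{y} + \tfrac{1}{\tau}\left(\nabla \psi_{\bbX}(\hat{x}) - \nabla \psi_{\bbX}(\bar{x})\right) + \xi = 0$, and combine it with the subgradient inequality $g(x) \ge g(\hat{x}) + \langle \xi,\, x - \hat{x}\rangle$ to obtain
\begin{align*}
g(\hat{x}) - g(x) \le \langle \nabla f(\bar{x}) + K^*\tilde{y},\, x - \hat{x}\rangle + \tfrac{1}{\tau}\langle \nabla \psi_{\bbX}(\hat{x}) - \nabla \psi_{\bbX}(\bar{x}),\, x - \hat{x}\rangle.
\end{align*}
The analogous computation for the dual prox, using some $\eta \in \partial h^*(\hat{y})$ from its optimality condition, yields $h^*(\hat{y}) - h^*(y) \le \langle K\tilde{x},\, \hat{y} - y\rangle + \tfrac{1}{\sigma}\langle \nabla\psi_{\bbY}(\hat{y}) - \nabla\psi_{\bbY}(\bar{y}),\, y - \hat{y}\rangle$.

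Next I would convert the two inner products of gradient differences into Bregman divergences via the three-point identity $\langle \nabla\psi_{\mathbb{V}}(b) - \nabla\psi_{\mathbb{V}}(a),\, c - b\rangle = D_{\mathbb{V}}(c, a) - D_{\mathbb{V}}(c, b) - D_{\mathbb{V}}(b, a)$, which follows by expanding definition \eqref{eq:breg-dist-from-dgf} and cancelling the $\psi$-values. Applied to the primal term with $(a,b,c) = (\bar{x},\hat{x},x)$ this produces exactly $\tfrac{1}{\tau}\left(D_{\bbX}(x,\bar{x}) - D_{\bbX}(x,\hat{x}) - D_{\bbX}(\hat{x},\bar{x})\right)$, and applied to the dual term it produces the analogous block $\tfrac{1}{\sigma}\left(D_{\bbY}(y,\bar{y}) - D_{\bbY}(y,\hat{y}) - D_{\bbY}(\hat{y},\bar{y})\right)$, with the dual prox output occupying the middle slot.

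To handle $f$ I would split $f(\hat{x}) - f(x)$ into a linearization gap plus a convexity part, namely $\left(f(\hat{x}) - f(\bar{x}) - \langle\nabla f(\bar{x}), \hat{x}-\bar{x}\rangle\right) + \left(f(\bar{x}) + \langle\nabla f(\bar{x}), \hat{x}-\bar{x}\rangle - f(x)\right)$. The first bracket is at most $\tfrac{L_f}{2}\|\hat{x}-\bar{x}\|^2$ by the descent lemma for the $L_f$-smooth $f$; for the second I use convexity, $f(x) \ge f(\bar{x}) + \langle\nabla f(\bar{x}), x - \bar{x}\rangle$, which collapses the bracket to $\langle\nabla f(\bar{x}), \hat{x}-x\rangle$. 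This residual gradient term then cancels precisely against the $\langle\nabla f(\bar{x}), x-\hat{x}\rangle$ carried by the $g$-inequality above, so the only surviving primal pieces are $\tfrac{L_f}{2}\|\hat{x}-\bar{x}\|^2$, the primal divergence block, and the coupling term $\langle K^*\tilde{y},\, x-\hat{x}\rangle = \langle K(x-\hat{x}),\, \tilde{y}\rangle$.

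Finally I would expand $\cL(\hat{x}, y) - \cL(x, \hat{y}) = \langle K\hat{x}, y\rangle - \langle Kx, \hat{y}\rangle + \bigl(f(\hat{x})-f(x)\bigr) + \bigl(g(\hat{x})-g(x)\bigr) + \bigl(h^*(\hat{y})-h^*(y)\bigr)$ and substitute the three bounds. The step I expect to be most error-prone is the bilinear bookkeeping: one must verify that the raw $\langle K\hat{x}, y\rangle - \langle Kx, \hat{y}\rangle$ together with $\langle K(x-\hat{x}),\, \tilde{y}\rangle$ from the primal side and $-\langle K\tilde{x},\, y-\hat{y}\rangle$ from the dual side collapse into the claimed $\langle K(x-\hat{x}),\, \tilde{y}-\hat{y}\rangle - \langle K(\tilde{x}-\hat{x}),\, y-\hat{y}\rangle$. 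Expanding both expressions and matching monomials in $\langle K\cdot,\cdot\rangle$ confirms this: the $\langle K(x-\hat{x}),\tilde{y}\rangle$ contributions coincide, the $\langle K\hat{x},\hat{y}\rangle$ terms cancel, and the remaining four terms agree term by term. Collecting the two divergence blocks, the smoothness remainder $\tfrac{L_f}{2}\|\hat{x}-\bar{x}\|^2$, and this coupling expression yields the asserted inequality.
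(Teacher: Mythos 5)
Your argument is correct and is exactly the standard derivation: the paper does not prove this lemma itself but imports it as Lemma~1 of \cite{chambolle2016ergodic}, and your combination of the two prox optimality conditions, the three-point identity for Bregman divergences, the descent-lemma/convexity split of $f$, and the bilinear bookkeeping (which I checked term by term) is precisely that proof. One discrepancy worth flagging: your derivation yields $-\frac{1}{\sigma}D_{\bbY}(y,\hat{y})$ in the dual block, as you note, whereas the statement as printed has $-\frac{1}{\sigma}D_{\bbY}(y,\tilde{y})$; these are not equivalent in general. The printed $\tilde{y}$ is evidently a typo for $\hat{y}$ --- the original lemma in \cite{chambolle2016ergodic} has $\hat{y}$, and the telescoping of $\frac{1}{\sigma}\bigl(D_{\bbY}(y,y^t)-D_{\bbY}(y,y^{t+1})\bigr)$ in the proof of Theorem~\ref{thm:pd-vanilla-weighted} only goes through with $\hat{y}=y^{t+1}$ in that slot --- so you have proved the intended (and actually used) statement rather than the literal one.
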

	
	\begin{algorithm}[tb]
		\caption{Nonlinear primal-dual algorithm (PDA)}
		\label{alg:pda}
		\begin{algorithmic}
			\STATE {\bfseries Input:} Initial iterate $(x^0, y^0)\in \bbX\times \bbY$, stepsizes $\tau, \sigma>0$, Bregman divergences $D_{\bbX}$ and $D_{\bbY}$.
			\STATE {\bfseries Iterations:} For $t = 0, 1, 2, \dots$, compute
			$(x^{t+1}, y^{t+1}) = PD_{\tau, \sigma}(x^t, y^t, 2x^{t+1} - x^t, y^t)$.
		\end{algorithmic}
	\end{algorithm}
	Based on Lemma \ref{lem:pd-lemma}, we obtain the following extension of Theorem 1 in \citet{chambolle2016ergodic} that incorporates IIAS.
	\begin{theorem} \label{thm:pd-vanilla-weighted}
		For $t = 0, 1, 2, \dots$, let $w_t = t^q$ for some $q\geq 0$ and $(x^t, y^t)$, $t = 1,2, \dots$ generated by \textnormal{PDA}, where stepsizes $\tau, \sigma$ are chosen such that, for all $x, x' \in \dom g$ and $y, y' \in \dom h^*$, it holds that
		\begin{align}
		\left(\frac{1}{\tau} - L_f\right) D_{\bbX}(x, x') + \frac{1}{\sigma} D_{\bbY}(y, y') 
		- \langle K(x - x'), y - y'\rangle \geq 0. \label{eq:pd-stepsize-cond}
		\end{align}
		Let $\bar{x}^T$, $\bar{y}^T$ be as in \eqref{eq:ave-iter-def}. 
		Denote \[A(x, y, x', y') = \frac{1}{\tau} D_{\bbX}(x, x') + \frac{1}{\sigma}D_{\bbY}(y, y') - \langle K(x-x'), y-y'\rangle.\] 
		Let $\Omega = \sup_{x,x'\in \dom g,\, y, y' \in \dom h^*} A(x, y, x', y')$. 
		Then, for any $T\geq 1$ and $(x,y) \in \bbX \times \bbY$, one has \[\cL(\bar{x}^T, y) - \cL(x, \bar{y}^T) \leq \frac{(q + 1)\Omega}{T}.\] 
	\end{theorem}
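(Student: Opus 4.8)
The plan is to run the argument behind Theorem~1 of \citet{chambolle2016ergodic}, but to replace its uniform telescoping sum by a weighted one and then control that weighted sum by a summation-by-parts (Abel) identity that exploits the polynomial growth $w_t=t^q$. Fix a comparison pair $(x,y)\in \dom g\times\dom h^*$ and abbreviate $A_t := A(x,y,x^t,y^t)$. I would first reduce the claim to a bound on the weighted sum of the pointwise gaps $\cL(x^{t+1},y)-\cL(x,y^{t+1})$: since $\cL(\cdot,y)$ is convex and $\cL(x,\cdot)$ is concave, and since the averages in \eqref{eq:ave-iter-def} are convex combinations with weights $w_s/S_T$, one gets
\begin{align*}
\cL(\bar{x}^T,y)-\cL(x,\bar{y}^T)\ \le\ \frac{1}{S_T}\sum_{t=0}^{T-1} w_{t+1}\left(\cL(x^{t+1},y)-\cL(x,y^{t+1})\right),
\end{align*}
where I index so that iterate $(x^{t+1},y^{t+1})$ carries the averaging weight $w_{t+1}$.

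The first substantive step is to produce a clean per-step inequality. Applying Lemma~\ref{lem:pd-lemma} with the PDA substitution $\bar{x}=x^t,\ \bar{y}=y^t,\ \tilde{x}=2x^{t+1}-x^t,\ \tilde{y}=y^t,\ \hat{x}=x^{t+1},\ \hat{y}=y^{t+1}$ gives an upper bound on $\cL(x^{t+1},y)-\cL(x,y^{t+1})$ whose primal and dual Bregman telescoping terms $\frac1\tau(D_{\bbX}(x,x^t)-D_{\bbX}(x,x^{t+1}))$ and $\frac1\sigma(D_{\bbY}(y,y^t)-D_{\bbY}(y,y^{t+1}))$ assemble the difference $A_t-A_{t+1}$. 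The remaining terms are the negative diagonal divergences $-\frac1\tau D_{\bbX}(x^{t+1},x^t)-\frac1\sigma D_{\bbY}(y^{t+1},y^t)$, the error term $\frac{L_f}{2}\|x^{t+1}-x^t\|^2$, and the two cross terms $\langle K(x-x^{t+1}),y^t-y^{t+1}\rangle-\langle K(x^{t+1}-x^t),y-y^{t+1}\rangle$. I would then use $1$-strong convexity of $\psi_{\bbX}$ to dominate $\frac{L_f}{2}\|x^{t+1}-x^t\|^2$ by $L_f D_{\bbX}(x^{t+1},x^t)$, and collect the cross terms, together with the cross part of $A_t-A_{t+1}$, into the single inner product $-\langle K(x^{t+1}-x^t),y^{t+1}-y^t\rangle$. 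The stepsize condition \eqref{eq:pd-stepsize-cond} evaluated at $(x^{t+1},x^t,y^{t+1},y^t)$ then certifies that $A_t-A_{t+1}-R_t\ge 0$, i.e. $\cL(x^{t+1},y)-\cL(x,y^{t+1})\le A_t-A_{t+1}$. The same condition at $(x,x^t,y,y^t)$ shows $A_t\ge 0$, while $A_t\le\Omega$ holds by definition of $\Omega$ as a supremum over the domains.

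Now comes the part where increasing averaging genuinely differs from the uniform case. Multiplying the per-step inequality by $w_{t+1}$, summing, and applying summation by parts gives
\begin{align*}
\sum_{t=0}^{T-1} w_{t+1}(A_t-A_{t+1}) \ =\ w_1 A_0 + \sum_{t=1}^{T-1}(w_{t+1}-w_t)A_t - w_T A_T.
\end{align*}
Unlike uniform averaging, which retains only the single initial value $A_0$, here \emph{every} $A_t$ along the trajectory appears with the nonnegative coefficient $w_{t+1}-w_t$; this is exactly why $\Omega$ must be a uniform bound on $A$ over the whole domain rather than merely the initial gap. Using $0\le A_t\le\Omega$, discarding $-w_TA_T\le 0$, and telescoping the weight increments via $w_1+\sum_{t=1}^{T-1}(w_{t+1}-w_t)=w_T$, the sum is at most $w_T\Omega$.

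Combining the three displays yields $\cL(\bar{x}^T,y)-\cL(x,\bar{y}^T)\le w_T\Omega/S_T$, and it remains to estimate $w_T/S_T$. Since $w_t=t^q$ is nondecreasing, $S_T=\sum_{t=1}^{T}t^q\ge\int_0^T s^q\,ds=T^{q+1}/(q+1)$, so $w_T/S_T=T^q/S_T\le (q+1)/T$, giving the stated bound $(q+1)\Omega/T$. I expect the main obstacle to be the per-step telescoping of the second step: the cross terms must be recombined exactly so that, after invoking \eqref{eq:pd-stepsize-cond}, the residual is a clean difference $A_t-A_{t+1}$ with no leftover terms depending on the comparison pair $(x,y)$, and one must verify $A_t\ge 0$ so that the boundary term $-w_TA_T$ can be dropped. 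That bookkeeping is the crux; the Abel summation and the $w_T/S_T$ estimate are then routine.
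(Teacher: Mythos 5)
Your proposal is correct and follows essentially the same route as the paper's proof: derive the critical per-step inequality $\cL(x^{t+1},y)-\cL(x,y^{t+1})\le A_t-A_{t+1}$ from Lemma~\ref{lem:pd-lemma} and the stepsize condition \eqref{eq:pd-stepsize-cond}, take the $w_{t+1}$-weighted sum, telescope via Abel summation to bound it by $w_T\Omega$, and finish with $S_T\ge T^{q+1}/(q+1)$. Your treatment is in fact slightly more careful than the paper's at the summation-by-parts step, where you explicitly justify discarding the boundary term $-w_TA_T$ by verifying $A_T\ge 0$ from \eqref{eq:pd-stepsize-cond}.
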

	\textit{Proof.}
		As in the proof of Theorem 1 in \cite{chambolle2016ergodic}, Lemma \ref{lem:pd-lemma} and PDA imply the following \textit{critical inequality}:
		\begin{align}
		& \cL(x^{t+1}, y) - \cL(x, y^{t+1}) \nnnl 
		& \leq A(x, y, x^t, y^t) - A(x, y, x^{t+1}, y^{t+1})  - \left( A(x^{t+1}, y^{t+1}, x^t, y^t) - \frac{L_f}{2} \|x^{t+1}-x^t \|^2 \right) \nnnl
		& \leq A(x, y, x^t, y^t) - A(x, y, x^{t+1}, y^{t+1})
		\label{eq:pd-vanilla-proof-critical-ineq}
		\end{align}
		where the last inequality is due to \eqref{eq:pd-stepsize-cond}.  Multiplying \eqref{eq:pd-vanilla-proof-critical-ineq} by $w_{t+1}$ and summing up over $t=0, 1, \dots, T-1$ yield
		\begin{align*}
			\sum_{t=1}^{T} w_t \left( \cL(x^t, y) - \cL(x, y^t) \right) & \leq \sum_{t=1}^{T} w_t \left( A(x,y,x^{t-1}, y^{t-1}) - A(x,y,x^t, y^t) \right) \\
			& = \sum_{t=1}^T (w_t - w_{t-1})  A(x,y,x^{t-1}, y^{t-1}) \leq \Omega w_T.
		\end{align*}
		The convex-concave structure of $\cL$ implies 
		\[\cL(\bar{x}^T, y) - \cL(x, \bar{y}^T) \leq \frac{1}{S_T}\sum_{t=1}^{T} w_t \left( \cL(x^t, y) - \cL(x, y^t) \right).\]
		Furthermore, \[S_T \geq \int_0^T x^q\, dx = \frac{T^{q+1}}{q+1}.\] Combining the above inequalities yields the claim.\qed 
		
	The key proof idea is to take the weighted sum of the critical inequalities at all $t$ and bound the right hand side via telescoping summation. In fact, this technique recurs in subsequent analysis of IIAS for other algorithms. Here, the bound degrades by a constant factor as $q$ increases, but numerical experiments show that a nonzero (often small) value of $q$ always yields significant speedup. 
	In addition, we remark that our result is no more restrictive than \cite{chambolle2016ergodic} in terms of the domain boundedness assumption (see Appendix \ref{app:domain-bounded} for more details). In fact, $\Omega$ often takes on a small, finite value in many realistic scenarios. For example, for a two-person zero-sum game, $g$ and $h^*$ are indicator functions of the strategy spaces $X, Y$ (which are simplexes for a matrix game), which are bounded polytopes with small diameters. 
	The linear map $K$ corresponds to the payoffs, which can be normalized to $\|K\| = 1$ w.l.o.g.
	Finally, note that Theorem \ref{thm:pd-vanilla-weighted} and subsequent theorems present \textit{point-wise} inequalities, that is, a uniform bound on $\cL(\bar{x}^T, y) - \cL(x, \bar{y}^T)$ independent of $(x,y)$. A bound on the saddle-point residual $\epsilon_{\rm sad}(\bar{x}^T, \bar{y}^T)$ can be easily obtained by taking $\min_{x\in \bbX} \max_{y\in \bbY}$ on both sides.\vspace{-5px}
	
	\section{Extensions of PDA}\label{sec:pd-extensions}\vspace{-2px}
	Similar IIAS can be applied to the relaxed and inertial versions of PDA, as described in \cite{chambolle2016ergodic}, as well as a nontrivial extension with linesearch (PDAL) \cite{malitsky2018first}. We use the same notation and setup as those in Section \ref{sec:pda-vanilla}. In addition, assume $\|\cdot \|_{\bbX}$, $\|\cdot\|_{\bbY}$ are Euclidean $2$-norms and $D_{\bbX}(x,x') = \frac{1}{2}\|x-x'\|_2^2$ and $D_{\bbY}(y,y') = \frac{1}{2}\|y - y'\|_2^2$.
	
	\textbf{Relaxed primal dual algorithm } The relaxed primal-dual algorithm (RPDA) in \cite{chambolle2016ergodic} is listed here as Algorithm \ref{alg:relax-pd}. We show a similar convergence regarding its IIAS. 
	The proof is similar to that of Theorem \ref{thm:pd-vanilla-weighted} and can be found in Appendix \ref{app:proof-relax}.
	\begin{theorem}
		Let $\tau, \sigma >0$ and $\rho \in (0,2)$ satisfy\footnote{Theorem 2 in \cite{chambolle2016ergodic} requires strict inequality instead. In fact, the nonstrict one suffices. The strict inequality is required for sequence convergence of the last iterates \cite[Remark 3]{chambolle2016ergodic}. The same is true for Theorem 3. However, there we assume strict inequality as in \cite[Theorem 3]{chambolle2016ergodic}, since it ensures that the weights grow polynomially eventually (so that IIAS does not reduce to uniform averaging).} $\left(\frac{1}{\tau} - \frac{L_f}{2-\rho}\right) \frac{1}{\sigma} \geq \|K\|_2^2$.
		Let $0\leq \rho_t \leq \rho_{t+1} \leq \rho$, $t=0, 1, 2 \dots$ Let $(\xi^t, \eta^t)$, $t = 1, 2, \dots$ be generated by \textnormal{RPDA}. Let $\bar{x}^T = \frac{1}{S_T}\sum_{t=1}^T w_t \xi^t$ and $\bar{y}^T = \frac{1}{S_T}\sum_{t=1}^T w_t \eta^t$. Then, for any $z = (x,y) \in \bbX\times \bbY$, one has
		\[\cL(\bar{x}^T, y) - \cL(x, \bar{y}^T) \leq \frac{(q+1)\Omega}{\rho_0 T},\] 
		where $\Omega = \sup_{z, z'\in \bbX \times \bbY} \frac{1}{2}\|z - z'\|_{M_{\tau, \sigma}}$. 
		\label{thm:relax-pd-weighted}
	\end{theorem}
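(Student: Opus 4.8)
The plan is to mirror the proof of Theorem~\ref{thm:pd-vanilla-weighted}: isolate a single \emph{critical inequality} for RPDA, take its $w_{t+1}$-weighted sum, telescope, and divide by the weight sum. The first step is to record the RPDA analogue of Lemma~\ref{lem:pd-lemma}. Writing $z^t=(x^t,y^t)$ for the relaxed iterates and $(\xi^{t+1},\eta^{t+1})$ for the underlying primal-dual (pre-relaxation) point produced from $z^t$, the relaxation step $z^{t+1}=z^t+\rho_t\big((\xi^{t+1},\eta^{t+1})-z^t\big)$, combined with Lemma~\ref{lem:pd-lemma} specialized to the Euclidean Bregman setting (where $A(x,y,x',y')=\tfrac12\|z-z'\|_{M_{\tau,\sigma}}^2$ with $z=(x,y)$), yields after expanding $\tfrac12\|z-z^{t+1}\|_{M_{\tau,\sigma}}^2$ and collecting terms the bound
\begin{align}
\rho_t\big(\cL(\xi^{t+1},y)-\cL(x,\eta^{t+1})\big)\leq A_t-A_{t+1}-R_t, \qquad A_t:=\tfrac12\|z-z^t\|_{M_{\tau,\sigma}}^2 . \label{eq:relax-crit}
\end{align}
Here $R_t$ is a residual collecting the quadratic terms in $\|(\xi^{t+1},\eta^{t+1})-z^t\|$ together with the $L_f$-term; the new feature relative to the vanilla case is the factor $\rho_t$ on the left. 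The content of this step is to verify $R_t\geq0$, which is exactly where the nonstrict stepsize condition $\big(\tfrac1\tau-\tfrac{L_f}{2-\rho}\big)\tfrac1\sigma\geq\|K\|_2^2$ and the bound $\rho_t\leq\rho$ are used.

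Granting \eqref{eq:relax-crit} with $R_t\geq0$, I would drop $R_t$, multiply by $w_{t+1}=(t+1)^q$, and sum over $t=0,\dots,T-1$. Since the factor $\rho_t$ sits only on the left, the right-hand side is identical in form to the vanilla proof, so the same telescoping argument gives, after reindexing,
\begin{align}
\sum_{t=1}^{T} w_t\,\rho_{t-1}\big(\cL(\xi^{t},y)-\cL(x,\eta^{t})\big)\leq \sum_{t=1}^T (w_t-w_{t-1})\,A_{t-1}\leq \Omega\, w_T , \label{eq:relax-sum}
\end{align}
where I use $w_t-w_{t-1}\ge0$, $0\le A_{t-1}\le\Omega=\sup A$, and the discarding of the nonnegative tail $w_T A_T$.

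It remains to convert \eqref{eq:relax-sum} into a bound on the gap at the increasing average, and this is where the relaxation parameter must be reconciled with the averaging weights. The factor $\rho_{t-1}$ is absorbed by viewing the iterates as carrying the effective weights $\tilde w_t:=w_t\rho_{t-1}$; convexity of $\cL(\cdot,y)$ and concavity of $\cL(x,\cdot)$ then give
\begin{align}
\cL(\bar x^T,y)-\cL(x,\bar y^T)\leq \frac{1}{\sum_{t=1}^T \tilde w_t}\sum_{t=1}^T \tilde w_t\big(\cL(\xi^t,y)-\cL(x,\eta^t)\big)\leq \frac{\Omega\, w_T}{\sum_{t=1}^T w_t\rho_{t-1}} . \label{eq:relax-final}
\end{align}
The monotonicity $\rho_{t-1}\ge\rho_0$ then gives $\sum_{t=1}^T w_t\rho_{t-1}\ge\rho_0\sum_{t=1}^T w_t=\rho_0 S_T$, and as in Theorem~\ref{thm:pd-vanilla-weighted} one has $S_T\ge T^{q+1}/(q+1)$, so that $w_T/(\rho_0 S_T)\le (q+1)/(\rho_0 T)$, yielding the claimed bound $\tfrac{(q+1)\Omega}{\rho_0 T}$. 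When $\rho_t\equiv\rho$ is constant the weights $\tilde w_t$ are proportional to $w_t$, so $\bar x^T,\bar y^T$ coincide with the $w_t$-weighted averages stated in the theorem.

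The main obstacle is the first step: establishing \eqref{eq:relax-crit} with a provably nonnegative $R_t$. Unlike the vanilla case, the relaxation couples the energy decrement to $\rho_t$ through the expansion of $\|z-z^{t+1}\|_{M_{\tau,\sigma}}^2$, producing a term of the form $\tfrac{\rho_t(\rho_t-1)}{2}\|(\xi^{t+1},\eta^{t+1})-z^t\|_{M_{\tau,\sigma}}^2$ that is of indefinite sign when $\rho_t>1$; controlling it requires combining it with the $-\tfrac12\|(\xi^{t+1},\eta^{t+1})-z^t\|^2$ and $\tfrac{L_f}{2}\|\xi^{t+1}-x^t\|^2$ terms and invoking the sharpened stepsize condition involving $\tfrac{L_f}{2-\rho}$. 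Once \eqref{eq:relax-crit} is in hand, the telescoping and averaging steps are routine adaptations of Theorem~\ref{thm:pd-vanilla-weighted}, the only genuinely new ingredient being the use of $\rho_{t-1}\ge\rho_0$ to extract the $1/\rho_0$ factor.
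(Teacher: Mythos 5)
Your overall strategy (critical inequality, $w_{t+1}$-weighted sum, telescoping, convexity) matches the paper's, and the paper likewise imports the critical inequality from the proof of Theorem 2 in \cite{chambolle2016ergodic} rather than re-deriving it, so the step you flag as ``the main obstacle'' is not where the two arguments diverge. The genuine problem is in how you place the relaxation parameter. You keep $\rho_t$ as a multiplier on the left-hand side, so after summing you obtain a bound on $\sum_{t=1}^T w_t\rho_{t-1}\bigl(\cL(\xi^t,y)-\cL(x,\eta^t)\bigr)$ and are forced to apply convexity with the effective weights $\tilde w_t = w_t\rho_{t-1}$. The resulting point $\bigl(\sum_t \tilde w_t\xi^t\bigr)/\sum_t\tilde w_t$ is \emph{not} the average $\bar x^T = \frac{1}{S_T}\sum_t w_t\xi^t$ in the theorem statement unless $\rho_t$ is constant, and the theorem only assumes $0\le\rho_t\le\rho_{t+1}\le\rho$. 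You cannot repair this by writing $\rho_0\sum_t w_t c_t \le \sum_t w_t\rho_{t-1}c_t$ with $c_t = \cL(\xi^t,y)-\cL(x,\eta^t)$, because the $c_t$ are not sign-definite: for indices with $c_t<0$ the larger factor $\rho_{t-1}\ge\rho_0$ pushes the sum the wrong way. So for nonconstant $\rho_t$ your argument proves a bound for a different weighted average than the one claimed.

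The paper avoids this by dividing the critical inequality through by $\rho_t$ before weighting, i.e., using $\cL(\xi^{t+1},y)-\cL(x,\eta^{t+1})\le\frac{1}{2\rho_t}(A_t-A_{t+1})$ as in \eqref{eq:relax-pd-crit-ineq}. Then the coefficients multiplying the $\cL$-terms are exactly $w_t$, so convexity yields the stated $w_t$-weighted averages directly, and the $\rho$-dependence is confined to the telescoping of the coefficients $w_t/\rho_{t-1}$ on the $A$-terms; there one uses $\rho_{t-2}\le\rho_{t-1}$ to get $\frac{w_t}{\rho_{t-1}}-\frac{w_{t-1}}{\rho_{t-2}}\le\frac{w_t-w_{t-1}}{\rho_{t-1}}\le\frac{w_t-w_{t-1}}{\rho_0}$, as in \eqref{eq:relax-pd-telescoped}. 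This is where the monotonicity of $\rho_t$ is actually needed --- not, as in your write-up, to lower-bound $\sum_t w_t\rho_{t-1}$. Your final numerics ($S_T\ge T^{q+1}/(q+1)$, hence $w_T/(\rho_0 S_T)\le(q+1)/(\rho_0 T)$) are fine once the weighted sum is set up correctly.
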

	\begin{algorithm}[tb]
		\caption{Relaxed primal-dual algorithm (RPDA)}
		\label{alg:relax-pd}
		\begin{algorithmic}
			\STATE {\bfseries Input:} Initial iterates $z^0 = (x^0, y^0)\in \bbX\times \bbY$, stepsizes $\tau, \sigma>0$, relaxation parameters $\rho_t$.
			\STATE{\bfseries Set:} $D_{\bbX}(x,x') = \frac{1}{2}\|x-x'\|_2^2$ and $D_{\bbY}(y, y') = \frac{1}{2}\|y-y'\|_2^2$. \\
			Denote $z^t = (x^t, y^t)$ and $\zeta^t = (\xi^t, \eta^t)$.
			\STATE{\bfseries Iterations:} For $t=0, 1, 2, \dots$, compute
			\begin{align*}
			(\xi^{t+1}, \eta^{t+1}) = PD_{\tau, \sigma}(x^t, y^t, 2\xi^{t+1} - x^t, y^t), \ \
			z^{t+1} = (1-\rho_n) z^t + \rho_n \zeta^{t+1}.
			\end{align*} 
		\end{algorithmic}
	\end{algorithm}

	\textbf{Inertial primal-dual algorithm } Another useful variant of PDA is the inertial primal-dual algorithm (IPDA) \cite{chambolle2016ergodic}, listed here as Algorithm \ref{alg:intertial-pd}. In contrast to PDA and RPDA, for IPDA, in order to preserve the rate of convergence, $w_t$ needs to satisfy a few additional inequalities, which arise from analyzing the telescoping sum bound. They do not affect the asymptotic rate of convergence. The choice of $w_t$ and the convergence guarantee are summarized below. The proof is based on the same principles but is more involved; it can be found in Appendix \ref{app:proof-inertial}. 	Note that the strict inequality involving $\tau, \sigma, \|K\|$ is needed for $b_t\geq b^*>1$, which ensures eventual unbounded, polynomial growth of $w_t$.
	
	\begin{algorithm}[tb]
		\caption{Inertial primal-dual algorithm (IPDA)}
		\label{alg:intertial-pd}
		\begin{algorithmic}
			\STATE {\bfseries Input:} $(x^{-1}, y^{-1}) = (x^0, y^0)\in \bbX\times \bbY$, $\tau, \sigma>0$, inertial parameters $\alpha_t$.
			\STATE{\bfseries Set:} $D_{\bbX}(x,x') = \frac{1}{2}\|x-x'\|_2^2$ and $D_{\bbY}(y, y') = \frac{1}{2}\|y-y'\|_2^2$. \\
			Denote $z^t = (x^t, y^t)$ and $\zeta^t = (\xi^t, \eta^t)$.
			\STATE{\bfseries Iterations:} For $t = 0, 1, 2, \dots$, compute\newline \[\zeta^t = z^t + \alpha_t (z^t - z^{t-1}),\ \ z^{t+1} = PD_{\tau,\sigma}(\xi^t, \eta^t, 2 x^{t+1} - \xi^t, \eta^t).\]
		\end{algorithmic}
	\end{algorithm}
	
	\begin{theorem}
		Let $\alpha >0$ be such that
		$ \left(\frac{1}{\tau} - \frac{(1 + \alpha)^2}{1 - 3\alpha} L_f \right)\frac{1}{\sigma} > \|K\|^2_2$ and $0\leq \alpha_t \leq \alpha_{t+1} \leq \alpha < \frac{1}{3}$ for all $t$. Let $(x^t, y^t)$ be generated by \textnormal{IPDA}.  Let $w_t$ be chosen as follows: 
		\[w_0 = 0,\ w_1 = 1,\ w_{t+1} = w_t \cdot \min\left\{b_t, \frac{(t+1)^q}{t^q} \right\}, \ t \geq 2,\] 
		where \[b_t = \min \left\{\frac{1-\alpha_{t-1}}{\alpha_t}, \frac{r(1-\alpha_{t-1}) - (1+\alpha_{t-1})}{\alpha_t (1 + 2r + \alpha_t)}\right\}, \ r = \frac{\frac{1}{\tau} - \sigma\|K\|^2}{L_f}.\]
		Then, $w_t\leq w_{t+1}$ and $b_t \geq b^*$ for some $b^* > 1$ for all $t$. Furthermore, let $S_T$, $\bar{x}^T$ and $\bar{y}^T$ be as in \eqref{eq:ave-iter-def}. For any $(x, y)\in \bbX \times \bbY$, one has
		\[\cL(x, \bar{y}^T) - \cL(\bar{x}^T, y) \leq \frac{(1-\alpha_0)w_1 A_0 + \Omega \left[ (1-\alpha_{T-1})w_T + \alpha_1 w_2  - w_1 \right]}{S_T} \leq \frac{(q+1)(2-\alpha_0) \Omega }{T},\] 
		where $A_0 := \| z - z^0\|^2_{M_{\tau, \sigma}}$ and $\Omega$ is as in Theorem \ref{thm:relax-pd-weighted}.
		\label{thm:inertial-pd-weighted}
	\end{theorem}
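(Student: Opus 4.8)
The plan is to reuse the three-stage template of Theorem~\ref{thm:pd-vanilla-weighted}: isolate a one-step \emph{critical inequality}, form its $w_{t+1}$-weighted sum over $t=0,\dots,T-1$, and telescope the right-hand side to a boundary expression of order $w_T/S_T$; convex-concavity together with growth estimates on $w_t$ and $S_T$ then closes the argument. The one genuinely new feature, and the source of the extra constraints on $w_t$, is that the inertial extrapolation $\zeta^t=z^t+\alpha_t(z^t-z^{t-1})$ injects \emph{momentum terms} $\|z^t-z^{t-1}\|_{M_{\tau,\sigma}}^2$ and associated cross terms into the per-step estimate; unlike the pure distance terms these do not telescope by themselves, and I will have to force them to cancel.

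First I would derive the critical inequality by applying Lemma~\ref{lem:pd-lemma} to the $PD$ step executed at the extrapolated point $\zeta^t=(\xi^t,\eta^t)$ and specializing to the Euclidean setup (so the Bregman terms collapse into $\tfrac12\|\cdot\|_{M_{\tau,\sigma}}^2$, as in Theorem~\ref{thm:relax-pd-weighted}), obtaining for every $z=(x,y)$ an estimate of the schematic form
\[
\cL(x^{t+1},y)-\cL(x,y^{t+1})\;\le\;\tfrac12\|z-\zeta^t\|_{M_{\tau,\sigma}}^2-\tfrac12\|z-z^{t+1}\|_{M_{\tau,\sigma}}^2-\tfrac12\|z^{t+1}-\zeta^t\|_{M_{\tau,\sigma}}^2+\tfrac{L_f}{2}\|x^{t+1}-\xi^t\|^2 .
\]
Expanding $\|z-\zeta^t\|_{M_{\tau,\sigma}}^2$ with $\zeta^t=z^t+\alpha_t(z^t-z^{t-1})$ will split this into the telescoping pair $\tfrac12\|z-z^t\|^2-\tfrac12\|z-z^{t+1}\|^2$, a cross term $\alpha_t\langle z-z^t,z^t-z^{t-1}\rangle_{M_{\tau,\sigma}}$, and the momentum quadratics in $\|z^t-z^{t-1}\|^2$ and $\|z^{t+1}-z^t\|^2$, the $L_f$ remainder being reabsorbed through the stepsize condition, which is exactly what the ratio $r=(\tfrac1\tau-\sigma\|K\|^2)/L_f$ records. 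Multiplying by $w_{t+1}$ and summing, the distance pairs telescope to the stated numerator, while each $\|z^t-z^{t-1}\|^2_{M_{\tau,\sigma}}$ picks up a coefficient mixing $w_t,w_{t+1},\alpha_{t-1},\alpha_t$ and $r$. Demanding that these coefficients be nonpositive, so the momentum terms may be discarded rather than accumulated, should produce precisely two inequalities of the form $w_{t+1}\le w_t\cdot(\text{ratio})$, whose conjunction is $w_{t+1}\le w_t b_t$ with $b_t$ as defined; this is why I require $w_{t+1}/w_t\le\min\{b_t,(t+1)^q/t^q\}$, the first factor annihilating the inertial terms and the second keeping the growth polynomial.

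Next I would verify the two structural claims. Both entries of $b_t$ are monotone on $0\le\alpha_{t-1}\le\alpha_t\le\alpha$---the first decreasing in $\alpha_t$, the second decreasing in both arguments---so each is minimized at $\alpha_{t-1}=\alpha_t=\alpha$. The first minimum equals $(1-\alpha)/\alpha>2$ because $\alpha<\tfrac13$, and the second minimum exceeds $1$ iff $r(1-\alpha)-(1+\alpha)>\alpha(1+2r+\alpha)$, which rearranges to $r(1-3\alpha)>(1+\alpha)^2$, i.e.\ $\tfrac1\tau-\sigma\|K\|^2>\tfrac{(1+\alpha)^2}{1-3\alpha}L_f$---exactly the strict stepsize hypothesis. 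Hence $b_t\ge b^*:=\min\{(1-\alpha)/\alpha,\,(r(1-\alpha)-(1+\alpha))/(\alpha(1+2r+\alpha))\}>1$, and since every multiplier $\min\{b_t,(t+1)^q/t^q\}\ge1$ the sequence $w_t$ is nondecreasing.

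Finally, for the rate, convex-concavity (Jensen, as in Theorem~\ref{thm:pd-vanilla-weighted}) relates the weighted average of the pointwise gaps $\cL(x^t,y)-\cL(x,y^t)$ to the averaged gap, so the telescoped bound yields the first displayed inequality. To reach the clean $O(1/T)$ form I would bound $A_0=\|z-z^0\|_{M_{\tau,\sigma}}^2\le2\Omega$, use $1-\alpha_{T-1}\le1$ and the boundary estimate $\alpha_1 w_2\le(1-\alpha_0)w_1$ (which follows from $b_1\le(1-\alpha_0)/\alpha_1$) to discard the low-order terms, and invoke the elementary fact that $w_t/t^q$ is nonincreasing (immediate from $w_{t+1}/w_t\le(t+1)^q/t^q$), giving $S_T=\sum_{t=1}^T w_t\ge(w_T/T^q)\sum_{t=1}^T t^q\ge w_T T/(q+1)$ and hence $w_T/S_T\le(q+1)/T$; combining these collapses the bound to $(q+1)(2-\alpha_0)\Omega/T$. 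I expect the main obstacle to be the bookkeeping in the weighted summation---correctly collecting the coefficients of the inertial cross and momentum terms and pinning down the thresholds at which they turn nonpositive. It is these thresholds, rather than any a priori guess, that \emph{define} $b_t$; once they are established, the rest is the weighted-telescoping machinery already used for Theorem~\ref{thm:pd-vanilla-weighted}.
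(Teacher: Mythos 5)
Your plan coincides with the paper's proof in all of its main mechanisms: the same critical inequality with momentum terms $B_t=\tfrac12\|z^t-z^{t-1}\|^2_{M_{\tau,\sigma}}$ and $C_t=\tfrac12\|x^t-x^{t-1}\|^2$, the same $w_{t+1}$-weighted telescoping, the same origin of the two entries of $b_t$ (one from keeping $1-\alpha_{t-1}-\tfrac{w_{t+1}}{w_t}\alpha_t\ge 0$, one from a positive-semidefiniteness condition that lets the $B_t$ coefficient absorb the $L_f C_t$ coefficient), the same verification that $b_t\ge b^*>1$ under the strict stepsize hypothesis, and the same $w_T/S_T\le (q+1)/T$ estimate.

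There is, however, one step that would fail as you have written it. You assert that after weighting and summing, ``the distance pairs telescope to the stated numerator.'' The pair $A_t-A_{t+1}$ does, but the inertial pair $\alpha_t(A_t-A_{t-1})$ telescopes with the \emph{wrong sign at the right endpoint}: multiplying by $w_{t+1}$ and summing over $t=0,\dots,T-1$ leaves a positive leftover $+\alpha_{T-1}w_T A_{T-1}$ of order $w_T$, which is not present in the numerator you claim and is not controlled by any of the conditions $w_{t+1}/w_t\le b_t$. The paper disposes of it by grouping it with the terminal terms $-w_TA_T+(\alpha_{T-1}-1)w_TB_T+L_f(1+\alpha_{T-1})w_TC_T$, applying $\tfrac12 A_{T-1}\le A_T+B_T$, and then invoking a further positive-semidefiniteness condition, namely that $\bigl(\tfrac1\tau-\tfrac{1+\alpha_{T-1}}{1-3\alpha_{T-1}}L_f\bigr)\tfrac1\sigma\ge\|K\|^2$, which holds because $\alpha_{T-1}\le\alpha<\tfrac13$ and $\tfrac{1+\alpha_{T-1}}{1-3\alpha_{T-1}}\le\tfrac{(1+\alpha)^2}{1-3\alpha}$; this is where the hypothesis $\alpha<\tfrac13$ earns its keep a second time, beyond making $b_t>1$. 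Without some such absorption you could still salvage an $O(1/T)$ rate by crudely bounding $\alpha_{T-1}w_TA_{T-1}\le 2\alpha_{T-1}w_T\Omega$, but you would obtain a worse constant (roughly $1+\alpha_{T-1}$ in place of $1-\alpha_{T-1}$ in front of $w_T\Omega$) and not the exact numerator, nor the $(q+1)(2-\alpha_0)\Omega/T$ bound, claimed in the theorem. The remainder of your outline---the monotonicity argument giving $b^*=\min\{2,s\}>1$, the bound $\alpha_1w_2\le(1-\alpha_0)w_1$, and the estimate $S_T\ge w_T T/(q+1)$---matches the paper.
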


	\textbf{Primal-dual algorithm with linesearch } Recently, \cite{malitsky2018first} proposed a primal-dual algorithm with linesearch (PDAL), which is listed as Algorithm \ref{alg:pdal} here. To align the primal and dual iterates in the increasing averaging version of PDAL, $y^t$ here correspond to $y^{t+1}$ in the original paper, $t = 0, 1, 2, \dots$ Assume $\|\cdot\|_{\bbX}$, $\|\cdot\|_{\bbY}$, $D_{\bbX}$ and $D_\bbY$ are all Euclidean as in Theorem \ref{thm:relax-pd-weighted}. Furthermore, assume $f = 0$ in \eqref{eq:sp-fgh*}. Theorem 3.5 in \cite{malitsky2018first} establishes the ergodic convergence rate of PDAL. We show the following theorem on IIAS for PDAL. Note that the averaging involves not only the weights $w_t$ but also the stepsizes $\tau_t$. Similar is true for the subsequent Mirror-type algorithms. The proof can be found in Appendix \ref{app:proof-pdal}. Here, denote $\Omega_\bbX = \sup_{x, x'\in \dom g}\frac{1}{2}\|x-x'\|^2$, $\Omega_\bbY = \sup_{y, y'\in \dom h^*} \frac{1}{2}\|y - y'\|^2$. 
	\begin{algorithm}[tb]
		\caption{PDAL: Primal-dual algorithm with linesearch}
		\label{alg:pdal}
		\begin{algorithmic}
			\STATE {\bfseries Input:} initial iterates $(x^0, y^0) \in \bbX\times \bbY$, initial stepsize $\tau_0>0$, backtracking discount factor $\mu$, backtracking break tolerance $\delta$, primal-dual ratio $\beta>0$. 
			\STATE{\bfseries Set:} stepsize growth factor $\theta_0 = 1$, $D_{\bbX}(x,x') = \frac{1}{2}\|x-x'\|_2^2$ and $D_{\bbY} = \frac{1}{2}\|x-x'\|_2^2$.
			\STATE{\bfseries Iterations:} For $t = 0, 1, 2, \dots$, compute
			$x^{t+1} = \prox_{\tau_t g }(x^t - \tau_t K^* y^t)$.
			\begin{enumerate}[(a)]
				\item \label{item:pdal-first-step-ls} Choose $\tau_{t+1} \in [\tau_t, \tau_t\sqrt{1+\theta_t}]$ and perform linesearch: compute $\theta_{t+1} = \frac{\tau_{t+1}}{\tau_t}$, $\tilde{x}^{t+1} = x^{t+1} + \theta_{t+1} (x^{t+1} - x^t)$ and 
				$y^{t+1} = \prox_{\beta \tau_{t+1} h^*}( y^t + \beta \tau_{t+1} K \tilde{x}^{t+1})$.
				\item Break if
				$\sqrt{\beta}\tau_{t+1} \|K^* y^{t+1} - K^* y^t\| \leq \delta \|y^{t+1} - y^t\|$.
				Otherwise, $\tau_{t+1} \leftarrow \tau_{t+1} \mu$, go to \eqref{item:pdal-first-step-ls}.
			\end{enumerate}
		\end{algorithmic}
	\end{algorithm}
	
	\begin{theorem} \label{thm:pdal}
		Let $(x^t, y^t)$ be generated by \textnormal{PDAL} for the saddle point problem \eqref{eq:sp-fgh*} with $f = 0$. Let $w_t$ be as follows: \[w_0 = 0,\ w_1 = 1,\ w_{t+1} = w_t \cdot \min\left\{ \frac{1+\theta_t}{\theta_{t+1}}, \frac{(t+1)^q}{t^q} \right\}, \ t \geq 1.\]
		Let \[S_T = \sum_{t=1}^T w_t \tau_t, \ \bar{x}^T = \frac{w_1 \theta_1 \tau_1 x_0 + \sum_{t=1}^N w_t \tau_t \tilde{x}^t }{w_t \tau_1 \theta_1 + S_T }, \ \bar{y}^T = \frac{\sum_{t=1}^T w_t\tau_t y^t }{S_T}.\] 
		For any $(x, y)\in \bbX \times \bbY$, it holds that, for all $T \geq 2$,
		\[\cL(\bar{x}^T, y) - \cL(x, \bar{y}^T) \leq \frac{w_T \left(\Omega_\bbX + \frac{1}{\beta} \Omega_\bbY\right) + w_1 \tau_1 \theta_1 P_0 }{S_T} \leq \frac{(q+1) \left(\Omega_\bbX + \frac{1}{\beta} \Omega_\bbY + \tau_1 \theta_1 P_0\right)}{T},\] where $P_0 = g(x^0) - g(x) + \langle K(x^0 - x), y \rangle$.
		\vspace{-8px}
	\end{theorem}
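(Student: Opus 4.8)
The plan is to follow the same template as the proof of Theorem \ref{thm:pd-vanilla-weighted}: isolate the per-iteration \emph{critical inequality} underlying the ergodic analysis of PDAL (Theorem 3.5 in \cite{malitsky2018first}, specialized to $f=0$), take a weighted sum of it with weights $w_t\tau_t$ rather than the uniform $\tau_t$, telescope the right-hand side, and finally invoke convex-concavity of $\cL$ together with a lower bound on $S_T$. First I would write the critical inequality at step $t$ in the schematic form
\[ \tau_t\big(\cL(\tilde{x}^t, y) - \cL(x, y^t)\big) \le \big(\Phi_{t-1}(x,y) - \Phi_t(x,y)\big) + \big(E_{t-1} - E_t\big) - R_t, \]
where $\Phi_t(x,y) = \tfrac12\|x-x^t\|^2 + \tfrac1{2\beta}\|y-y^t\|^2$ collects the telescoping squared-distance terms, $E_t \propto \tau_t\theta_t\langle K(x^t-x^{t-1}),\, y^t - y\rangle$ is the extrapolation cross-energy produced by $\tilde{x}^t = x^t+\theta_t(x^t-x^{t-1})$, and $R_t \ge 0$ is the residual that the linesearch break condition $\sqrt{\beta}\tau_{t+1}\|K^*y^{t+1}-K^*y^t\| \le \delta\|y^{t+1}-y^t\|$ renders nonnegative.

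The weight recursion is tailored to this structure. The cap $w_{t+1}/w_t \le (t+1)^q/t^q$ plays exactly its role from Theorem \ref{thm:pd-vanilla-weighted}: after multiplying by $w_t$ and summing, the distance part $\sum_t w_t(\Phi_{t-1}-\Phi_t)$ telescopes (using $w_t\ge w_{t-1}$) to a boundary term bounded by $w_T(\Omega_\bbX + \tfrac1\beta\Omega_\bbY)$. The second cap $w_{t+1}/w_t \le (1+\theta_t)/\theta_{t+1}$, equivalently $w_{t+1}\theta_{t+1} \le w_t(1+\theta_t)$, is the new ingredient: it is precisely what makes the \emph{weighted} extrapolation terms $w_t(E_{t-1}-E_t)$ telescope despite the data-dependent stepsizes, so that only an initial cross-term survives. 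Because $\tilde{x}^t$ carries the factor $\theta_t$, the weighted sum $\sum_t w_t\tau_t \tilde{x}^t$ does not on its own reduce to a convex combination of the $x^t$; expanding $\tilde{x}^t = (1+\theta_t)x^t - \theta_t x^{t-1}$ and reindexing produces a leftover $x^0$ term, which is exactly why $\bar{x}^T$ is defined with the extra summand $w_1\theta_1\tau_1 x^0$ in the numerator and the matching $w_1\tau_1\theta_1$ in the denominator. The same reindexing, combined with $w_{t+1}\theta_{t+1}\le w_t(1+\theta_t)$, is what guarantees the resulting coefficients are nonnegative, so that $\bar{x}^T$ is a genuine convex combination.

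Next I would assemble the weighted sum. After telescoping, the right-hand side is bounded by $w_T(\Omega_\bbX + \tfrac1\beta\Omega_\bbY)$ plus the boundary contribution of the $t=0$ primal proximal step, whose optimality condition yields the term $w_1\tau_1\theta_1 P_0$ with $P_0 = g(x^0)-g(x)+\langle K(x^0-x), y\rangle$; the residuals $R_t$ are discarded by nonnegativity. Convex-concavity of $\cL$ then converts the left-hand side $\sum_t w_t\tau_t\big(\cL(\tilde{x}^t,y)-\cL(x,y^t)\big)$ into $S_T\big(\cL(\bar{x}^T,y)-\cL(x,\bar{y}^T)\big)$ for the stated averages, giving the first inequality. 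The $(q+1)/T$ bound then follows as in Theorem \ref{thm:pd-vanilla-weighted} by lower-bounding $S_T$: the cap $w_{t+1}/w_t\le(t+1)^q/t^q$ forces $w_t/t^q$ to be nonincreasing (hence $w_t\le t^q$), and this, with the stepsizes $\tau_t$ now folded into $S_T$ and controlled by the linesearch bounds, converts the prefactor $w_T/S_T$ into $(q+1)/T$.

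The main obstacle is the joint bookkeeping of the extrapolation cross-terms and the adaptive stepsizes. Unlike the vanilla and relaxed variants, where the weights only had to grow polynomially, here they must simultaneously (i) telescope the $\theta$-weighted cross-energy $E_t$, which forces the $(1+\theta_t)/\theta_{t+1}$ cap and couples $w_t$ to the realized stepsizes, and (ii) keep $\bar{x}^T$ a valid convex combination after the $\tilde{x}^t\mapsto x^t$ reindexing. Verifying that the linesearch break condition indeed makes each $R_t$ nonnegative after the $w_t\tau_t$ weighting, and that the surviving boundary contributions collapse to exactly $w_1\tau_1\theta_1 P_0$ together with the $x^0$ correction, is the delicate part; the remaining steps are routine telescoping as in Theorem \ref{thm:pd-vanilla-weighted}.
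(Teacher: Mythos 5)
Your overall strategy is the paper's: import the per-iteration inequality from the ergodic analysis of \cite{malitsky2018first}, multiply by $w_t$, telescope the squared-distance terms to $w_T\left(\Omega_\bbX+\frac{1}{\beta}\Omega_\bbY\right)$, and finish with convexity and $\frac{w_T}{S_T}\leq\frac{q+1}{T}$. The substantive difference is the form you posit for the critical inequality. The paper imports it from the proof of Theorem 3.4 in \cite{malitsky2018first} as $\tau_t\left[(1+\theta_t)P(x^t)-\theta_t P(x^{t-1})+D(y^t)\right]\leq E_t-E_{t+1}$, where $E_t$ contains only squared distances; the linesearch residual and the $\langle K(x^t-x^{t-1}),\cdot\rangle$ cross terms are already absorbed, so there is no separate cross-energy left for the weights to telescope. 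The cap $w_{t+1}\theta_{t+1}\leq w_t(1+\theta_t)$ therefore has the single job you also identify: after Abel resummation of $\sum_t w_t\tau_t\left[(1+\theta_t)P(x^t)-\theta_tP(x^{t-1})\right]$ it makes the coefficients $(1+\theta_{t-1})w_{t-1}\tau_{t-1}-\theta_tw_t\tau_t$ nonnegative, so that convexity of $P$ produces $P(\bar{x}^T)$ with the leftover boundary term $-w_1\tau_1\theta_1P(x^0)$ --- which is where $P_0$ comes from, rather than from the optimality condition of the initial proximal step. Your two mechanisms are really alternatives, not both needed: if the left-hand side were $\cL(\tilde{x}^t,y)$ as you write it, $\sum_tw_t\tau_t\tilde{x}^t$ would average directly and no $x^0$ correction or reindexing would arise; it is precisely the extrapolated combination $(1+\theta_t)P(x^t)-\theta_tP(x^{t-1})$ on the left that forces both. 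One wrinkle you inherit from the paper: since $\tau_t=\theta_t\tau_{t-1}$, nonnegativity of the resummed coefficients actually needs $w_t\theta_t^2\leq w_{t-1}(1+\theta_{t-1})$, a factor $\theta_t$ stronger than the stated recursion guarantees when $\theta_t>1$; the paper asserts the stronger bound without comment, so if you carry out your plan you should either adopt the cap $\frac{1+\theta_t}{\theta_{t+1}^2}$ or justify the assertion.
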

	\section{Mirror-type algorithms}\label{sec:mirror-algo}\vspace{-2px}
	A similar extension can also be applied to another class of algorithms for saddle-point problems, namely, Mirror Descent (MD) \cite{nemirovsky1983problem, beck2003mirror} and Mirror Prox (MP) \cite{nemirovski2004prox}. 
	These algorithms require a setup different from that of Section \ref{sec:pda-vanilla}.
	Specifically, assume that $\bbX$ and $\bbY$ are Euclidean spaces with norms $\|\cdot \|_\bbX$ and $\|\cdot\|_\bbY$. Let $\bbZ  = \bbX \times \bbY$ and $\|\cdot \|$ be be any norm (with dual norm $\|\cdot \|_*$) on $\bbZ$. Let $\cX \subseteq \bbX$ and $\cY \subseteq \bbY$ be closed convex sets. Let \[\phi: \cZ = \cX \times \cY \rightarrow \RR\] 
	be a convex-concave cost function, that is, $\phi(\cdot, y)$ is convex for any $y\in \cY$ and $\phi(x, \cdot)$ is concave for any $x \in \cX$. The saddle-point problem of interest is $\min_{x \in \cX} \max_{y \in \cY}\, \phi(x,y)$.
	Let $\psi_\bbZ: \bbZ \rightarrow \RR$ be a $1$-strongly convex (w.r.t. $\|\cdot \|_\bbZ$) smooth function, that is, a DGF on $\bbZ$. For example, given DGF $\psi_{\bbX}(x)$ and $\psi_\bbY$ for $\bbX$ and $\bbY$, the DGF $\psi_\bbZ(z) := \psi_{\bbX}(x) + \psi_\bbY(y)$, $z = (x,y)$, is $1$-strongly convex w.r.t. the norm $\|z\| := \sqrt{\|x\|_\bbX^2 + \|y\|_\bbY^2}$. Let the Bregman divergence function $D_\bbZ$ be defined as \eqref{eq:breg-dist-from-dgf} with $\mathbb{V} = \bbZ$. Let $\Omega = \sup_{z \in \cZ} D_{\bbZ}(z)$.\footnote{$\Omega$ here in fact corresponds to the constant $\Theta$ in \cite{ben2019lectures}, defined on page 356. There, instead, $\Omega := \sup_{z, z'\in \cZ} \psi_\bbZ(z,z')$ is the upper bound on the DGF.}
	The ``gradient vector field'' associated with $\phi$ is $F(z) = \left(\frac{\partial}{\partial_x}\phi(z), -\frac{\partial}{\partial_y}\phi(z)\right)$. We assume $F$ is bounded and $L$-Lipschitz continuous on $Z$, that is, $M_F = \sup_{z\in \cZ} \|F(z)\|_* < \infty$ and $\|F(x) - F(z')\|_* \leq L \|z - z'\|$ for any $z,z'\in \cZ$.
	For $z, \xi \in \bbZ$, define the (constrained) proximal mapping (of a linear function) as \[\prox^\cZ_{\langle \xi, \cdot \rangle}(z) := \argmin_{w\in \cZ} \left\{\langle \xi, w \rangle + D_\bbZ(w, z) \right\}.\] 
	The algorithms are listed together here as Algorithm \ref{alg:md-mp}. Similarly, we show that IIAS applied to MD and MP preserves their respective convergence rates, where the averaging weights involve the stepsizes $\tau_t$. During the course of this research, we found that IIAS for MP has already been analyzed in the growing lecture notes of \citet{ben2019lectures}. Specifically, Theorem 5.6.2 in \cite{ben2019lectures} presents the weighted averaging version of MP, though no explicit formulas for weights are given. 	The proof can be found in Appendix \ref{app:proof-md}. In addition,
	\cite[Theorem 5.3.5]{ben2019lectures} show that a similar bound holds for MD with increasing averaging.
	
	\begin{algorithm}[tb]
		\caption{Mirror Descent (MD) and Mirror Prox (MP)}
		\label{alg:md-mp}
		\begin{algorithmic}
			\STATE {\bfseries Input:} initial iterate $z^0 = (x^0, y^0)\in \bbZ$, stepsizes $\tau_t$.
			\STATE{\bfseries Iterations:} For $t = 0, 1, 2, \dots$, compute
			\begin{align*}
			& \textnormal{MD:}\  z^{t+1}  = \prox^\cZ_{\langle \tau_t F(z^t), \cdot \rangle}(z^t), \ \  \textnormal{MP:} \ 
			\tilde{z}^t  = \prox^\cZ_{\langle \tau_t F(z^t), \cdot \rangle}(z^t), \
			z^{t+1} = \prox^\cZ_{\langle \tau_t F(\tilde{z}^t), \cdot\rangle} (z^t)
			\end{align*}
		\end{algorithmic}
	\end{algorithm}
	\begin{theorem}
		Let $\tilde{z}^t = (\tilde{x}^t, \tilde{y}^t)$ be generated by \textnormal{MP} and
		$\delta_t := \tau_t \langle F(\tilde{z}^t), \tilde{z}_t - z_{t+1}\rangle - D_{\bbZ}(z_{t+1}, z_t)$. Let $w_t\geq 0$ be nondecreasing. Define
		\[S_T = \sum_{t=1}^T w_t \tau_t,\ \bar{z}^T = (\bar{x}^T, \bar{y}^T) = \frac{\sum_{t=1}^T w_t \tau_t \tilde{z}^t}{S_T}.\]  
		Then, for any $T\geq 1$ and any $(x,y)\in \bbX\times \bbY$, it holds that 
		\[\phi(\bar{x}^T, y) - \phi(x, \bar{y}^T) \leq \frac{w_T \Omega + \sum_{t=1}^T w_t \delta_t }{S_T}.\]  
		In particular, constant stepsizes $\tau_t = \frac{1}{L}$ and $w_t = t^q$, $q\geq 0$ ensure $\delta_t\leq 0$ and
		\[\phi(\bar{x}^T, y) - \phi(x, \bar{y}^T) \leq \frac{(q+1)L \Omega}{T}. \]
		\label{thm:mp}
	\end{theorem}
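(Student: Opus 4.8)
\textit{Proof plan.} The plan is to follow the general IIAS recipe of the paper: isolate a per-step ``critical inequality'' from the Mirror Prox analysis, take its $w_t$-weighted sum, telescope the Bregman terms, and convert the resulting averaged bound into a saddle-gap bound via convex-concavity.

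First I would extract the one-step inequality. Applying the three-point (optimality) inequality for the constrained prox map $\prox^\cZ$ to the \emph{second} MP step $z^{t+1} = \prox^\cZ_{\langle \tau_t F(\tilde z^t),\cdot\rangle}(z^t)$ with test point $u=(x,y)\in\cZ$ gives
\[\langle \tau_t F(\tilde z^t),\, z^{t+1}-u\rangle \le D_\bbZ(u,z^t) - D_\bbZ(u,z^{t+1}) - D_\bbZ(z^{t+1},z^t).\]
Writing $\tilde z^t - u = (z^{t+1}-u) + (\tilde z^t - z^{t+1})$ and adding $\tau_t\langle F(\tilde z^t),\tilde z^t - z^{t+1}\rangle$ to both sides absorbs the last two terms into exactly $\delta_t$, so
\[\tau_t\langle F(\tilde z^t),\, \tilde z^t - u\rangle \le D_\bbZ(u,z^t) - D_\bbZ(u,z^{t+1}) + \delta_t.\]
Since $F=(\partial_x\phi,-\partial_y\phi)$ and $\phi$ is convex-concave, the two-sided subgradient inequality bounds the left side below by $\tau_t\bigl(\phi(\tilde x^t,y)-\phi(x,\tilde y^t)\bigr)$. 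This is the critical inequality.

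Next I would multiply by $w_t\ge 0$ and sum over $t=1,\dots,T$, carrying $\sum_t w_t\delta_t$ unchanged. For the Bregman part $\sum_t w_t\bigl(D_\bbZ(u,z^t)-D_\bbZ(u,z^{t+1})\bigr)$, Abel summation yields $w_1 D_\bbZ(u,z^1) + \sum_{t=2}^T (w_t-w_{t-1})D_\bbZ(u,z^t) - w_T D_\bbZ(u,z^{T+1})$. Dropping the last (nonnegative) term and bounding each $D_\bbZ(u,z^t)\le\Omega$ collapses the telescope to $\Omega w_T$, because the positive coefficients $w_1 + \sum_{t=2}^T(w_t-w_{t-1})$ sum to $w_T$. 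Finally, since $\bar z^T$ is the convex combination of the $\tilde z^t$ with weights $w_t\tau_t/S_T$, convexity of $\phi(\cdot,y)$ and concavity of $\phi(x,\cdot)$ give $\phi(\bar x^T,y)-\phi(x,\bar y^T)\le \frac{1}{S_T}\sum_t w_t\tau_t\bigl(\phi(\tilde x^t,y)-\phi(x,\tilde y^t)\bigr)$, which combined with the above produces the first displayed bound.

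For the specialization I would show $\delta_t\le 0$ when $\tau_t=1/L$. Applying the prox inequality to the \emph{first} MP step $\tilde z^t=\prox^\cZ_{\langle \tau_t F(z^t),\cdot\rangle}(z^t)$ with test point $z^{t+1}$ bounds $\tau_t\langle F(z^t),\tilde z^t-z^{t+1}\rangle - D_\bbZ(z^{t+1},z^t)$ by $-D_\bbZ(z^{t+1},\tilde z^t)-D_\bbZ(\tilde z^t,z^t)$; adding the cross term $\tau_t\langle F(\tilde z^t)-F(z^t),\tilde z^t-z^{t+1}\rangle$ and estimating it via $L$-Lipschitzness $\|F(\tilde z^t)-F(z^t)\|_*\le L\|\tilde z^t-z^t\|$, $1$-strong convexity $D_\bbZ(a,b)\ge\tfrac12\|a-b\|^2$, and Young's inequality $ab\le\tfrac12 a^2+\tfrac12 b^2$ gives $\delta_t\le \tfrac{\tau_t L-1}{2}\bigl(\|\tilde z^t-z^t\|^2+\|\tilde z^t-z^{t+1}\|^2\bigr)\le 0$ at $\tau_t=1/L$. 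Then $S_T=\tfrac1L\sum_{t=1}^T t^q\ge \tfrac{T^{q+1}}{(q+1)L}$ and $w_T=T^q$ give $\Omega w_T/S_T\le (q+1)L\Omega/T$. The hard part will be the weighted telescoping: unlike the uniform case, which collapses cleanly to the single term $D_\bbZ(u,z^1)$, the increasing weights attach a positive coefficient to every $D_\bbZ(u,z^t)$, so the argument hinges on $\Omega$ serving as a uniform bound on these reference-point divergences over $\cZ$; the $\delta_t\le 0$ estimate is more computation but is the classical Mirror Prox bound.
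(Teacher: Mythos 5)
Your proposal is correct and follows essentially the same route as the paper: the paper simply cites Theorem 5.6.1 of the Ben-Tal--Nemirovski lecture notes for the per-step Mirror Prox inequality and the fact that $\tau_t = 1/L$ forces $\delta_t \leq 0$, and then performs exactly the weighted telescoping and $\frac{w_T}{S_T} \leq \frac{q+1}{T}$ computation you describe. The only difference is that you derive the cited ingredients from scratch (the three-point prox inequality applied to each MP step, Lipschitzness of $F$, strong convexity of the DGF, and Young's inequality), which makes your argument self-contained but mathematically identical to the content of the citation.
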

	\section{Numerical experiments}\label{sec:experiments} \vspace{-2px}
	We demonstrate the numerical speedup of increasing averaging schemes for matrix games, extensive-form games, Fisher-market-competitive-equilibrium computation, and image-denoising problems. For matrix games and EFG, our best FOMs with IIAS are compared against state-of-the-art methods for equilibrium computation.

	
	\textbf{Matrix games }
	Here, we describe the setups at a high level with further details deferred to Appendix \ref{app:details-game-exper}. A matrix game can be formulated as a bilinear SPP $\min_{x\in \Delta^{n_1}}\max_{y \in \Delta^{n_2}} \langle x, Ay\rangle$, where $\Delta^{k} := \{ x\in \RR^k \mid x^\top \ee = 1,\, x\geq 0 \}$. To use a PDA-type algorithm to solve a matrix game, let $g$ and $h^*$ be the indicator functions of $\Delta^{n_1}$ and $\Delta^{n_2}$, respectively, and $K = A^\top$. For MP, we take $\cX = \Delta^{n_1}$ and $\cY = \Delta^{n_2}$ and $\phi(x,y) = \langle x, Ay\rangle$. We also try a linesearch variant of MP, referred to as MPL, that performs linesearch as described in \cite[pp. 443]{ben2019lectures}. For all algorithms, we use their ``default'' stepsizes and Euclidean DGF. We generate matrix games of different dimensions with i.i.d. entries and solve them using all six algorithms. For each matrix game and each algorithm, we perform $T = 2000$ iterations, take increasing averages of the iterates and compute their saddle-point residuals. Residuals are normalized to make then commensurable in magnitude. The above is repeated $50$ times. We plot the averages and standard errors (which have small magnitudes and are nearly invisible) of the normalized residuals along sample paths of each setup-algorithm-averaging combination. Figure \ref{fig:fom-ave-rand-mat}displays the plots. As can be seen, IIAS leads to significant performance improvement for all algorithms across all experiment setups, both against uniform averages as expected, but, perhaps surprisingly, also against the last iterates.
	Next, we compare PDA, RPDA, and CFR$^+$ on the same set of random matrix games as well as on a $2\times 2$ matrix game with payoff matrix $A = \begin{bmatrix} 5 & -1 \\ 0 & 1 \end{bmatrix}$ \cite{farina2019optimistic}. Figure \ref{fig:fom-vs-cfr+} displays the results for these experiments: the upper plot is for the random matrix game experiments and displays averaged, normalized residuals and standard deviations of $3$ setups, similar to Figure \ref{fig:fom-ave-rand-mat}; the lower plot is for the $2\times 2$ matrix game and displays the (unnormalized) residual values. Clearly, PDA and RPDA outperform CFR$^+$ in both settings. Moreover, for the $2\times 2$ matrix game, the last iterates converge rapidly, suggesting the use of a large weight exponent $q$. As the lower subplot in Figure \ref{fig:fom-vs-cfr+} displays, using $q=10$ can even outperform the last iterates. 
	We stress that we test $q=10$ mostly for experimental curiosity. In general, using $q=1, 2$ yields significant speedup. Nevertheless, large $q$ does not lead to numerical issues in any experiment. See Appendix \ref{app:detail-large-q} for more details.
	
	\textbf{Extensive-form games } An EFG can be written as a bilinear saddle-point problem (BLSP) \[\min_{x\in \cX}\max_{y\in \cY}\, \langle x, Ay\rangle \label{blsp-game}\] 
	where $\cX\subseteq \RR^{n_1}$ and $\cY\subseteq \RR^{n_2}$ are polytopes encoding players' strategy spaces, known as \textit{treeplexes} \cite{hoda2010smoothing}.  We perform IIAS with uniform, linear, quadratic and cubic averaging for all first-order methods on two classic EFG benchmark instances \verb|Kuhn| and \verb|Leduc| poker.  The choices of the algorithms' hyperparameters are completely analogous to those in solving matrix games and is described in Appendix \ref{app:details-game-exper}. As Figure \ref{fig:fom-with-ias-on-efg} shows, increasing averages outperform uniform averages for all algorithms in both games. For all algorithms except MP, increasing averages also outperform the last iterates. For both games, we also compare RPDA with quadratic and $q=10$ averaging, CFR$^+$ \cite{tammelin2015solving} and EGT with the dilated entropy DGF \cite{kroer2018faster}. We plot the saddle-point residuals against number of \textit{gradient computations} $x \mapsto A^\top x$ and $y \mapsto A y$, since EGT uses linesearch and requires a varying number of gradient computations in each iteration. Here, since we are interested in the regime where gradient computations dominate the overall computation cost, we assume computing the proximal mappings under different DGFs takes much less time in comparison. As Figure \ref{fig:rpda-egt-cfr+-on-efg} shows, RPDA with quadratic and $q=10$ averaging significantly outperforms CFR$^+$ and EGT on \verb|Kuhn| but are not as fast as CFR$^+$ on \verb|Leduc|. RPDA with quadratic averaging and using theoretically safe, highly conservative stepsizes outperforms EGT, which employs sophisticated, unsafe, adaptive stepsizing. Furthermore, Appendix \ref{app:precond-rpda-cfr} presents additional experiments using preconditioned PDA and RPDA \cite{pock2011diagonal} with IIAS, bringing further speedup.
	\begin{figure}
		\includegraphics[width=0.5\columnwidth]{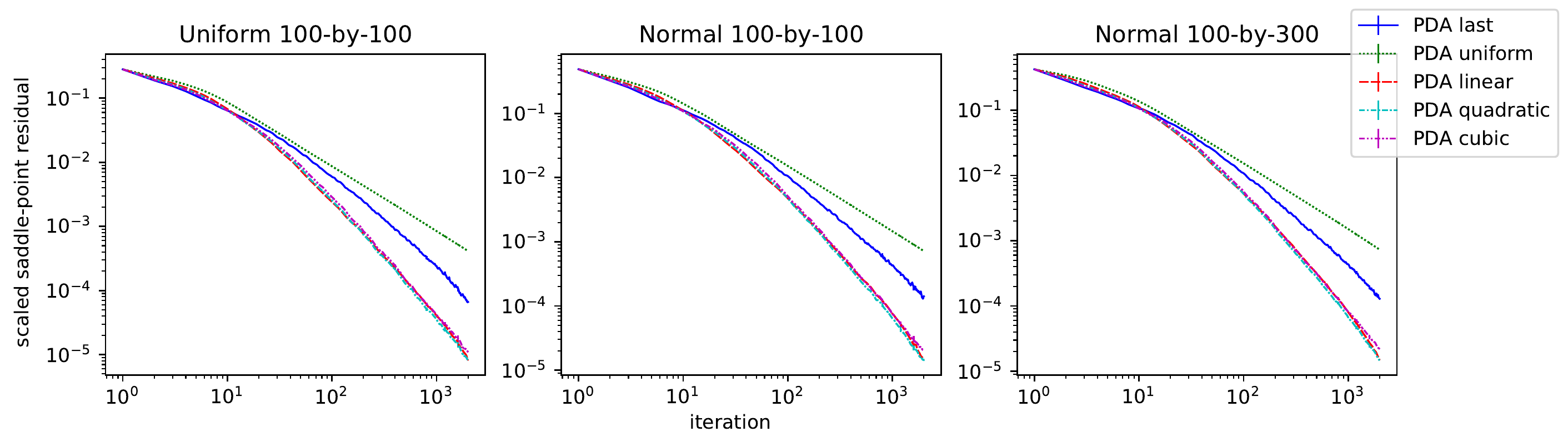} \includegraphics[width=0.5\columnwidth]{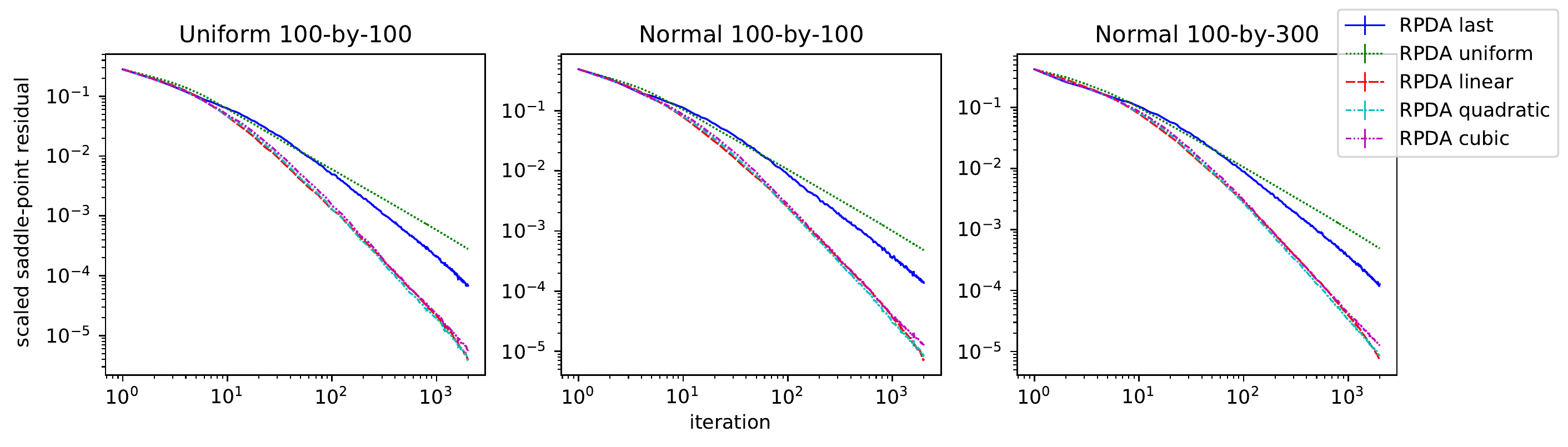}
		\includegraphics[width=0.5\columnwidth]{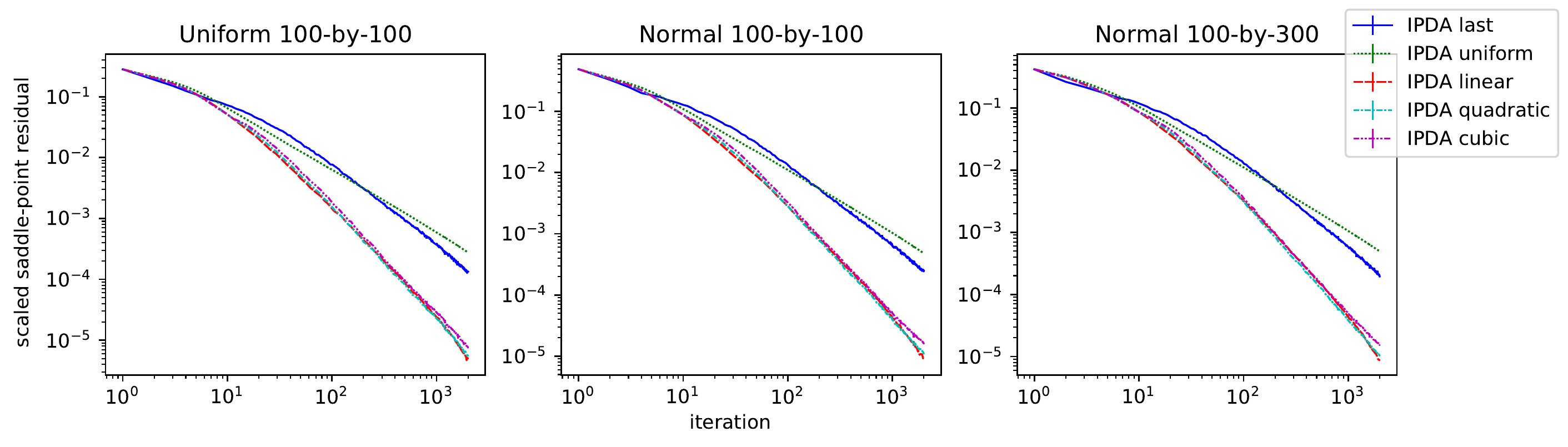}
		\includegraphics[width=0.5\columnwidth]{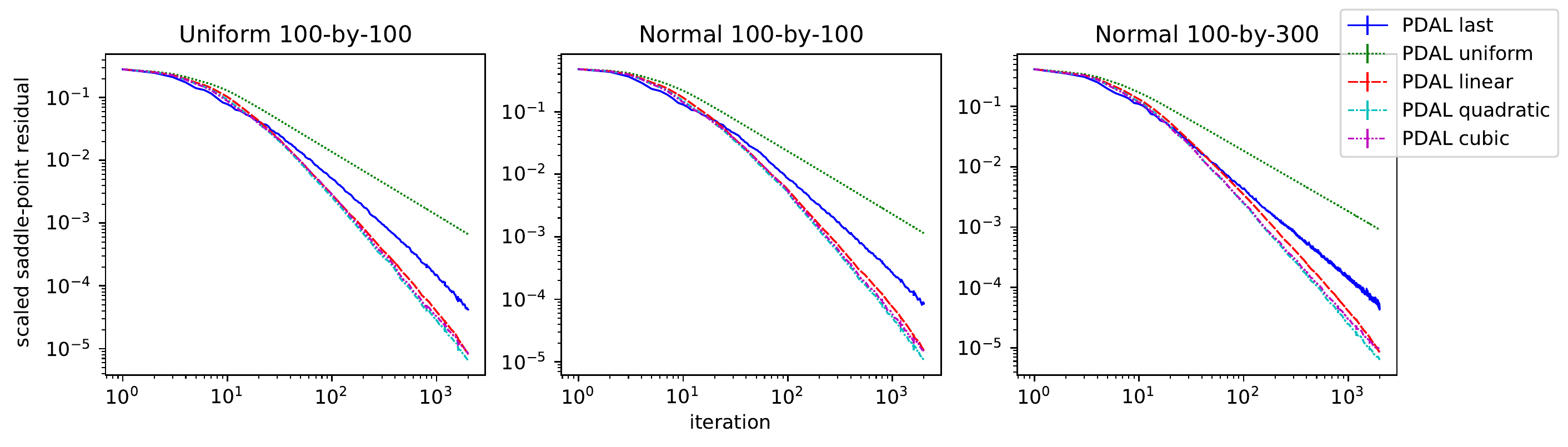}
		\includegraphics[width=0.5\columnwidth]{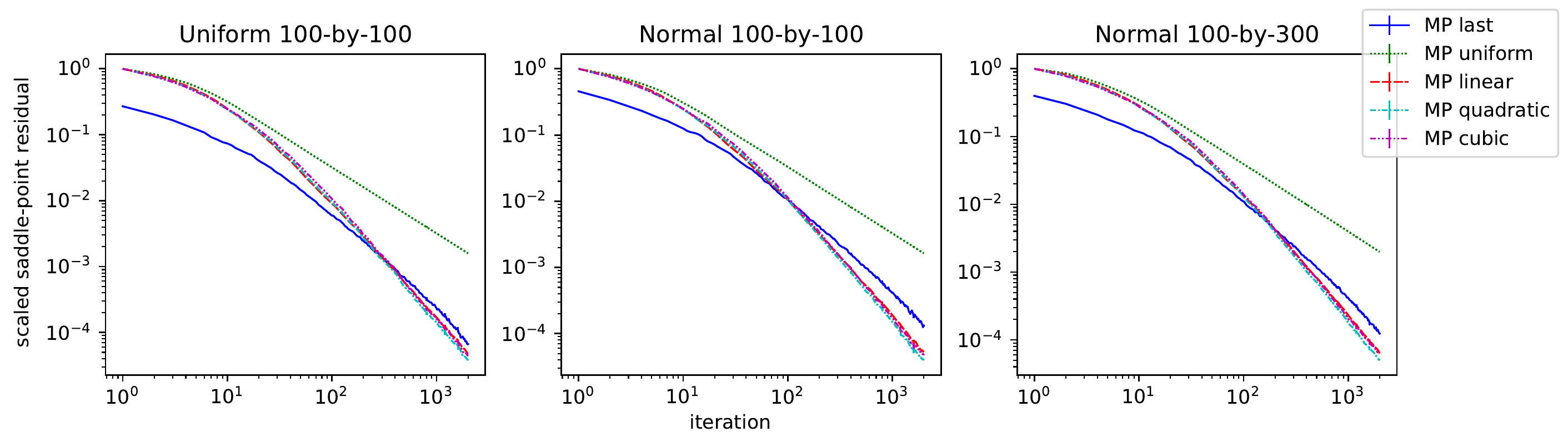} 			
		\includegraphics[width=0.5\columnwidth]{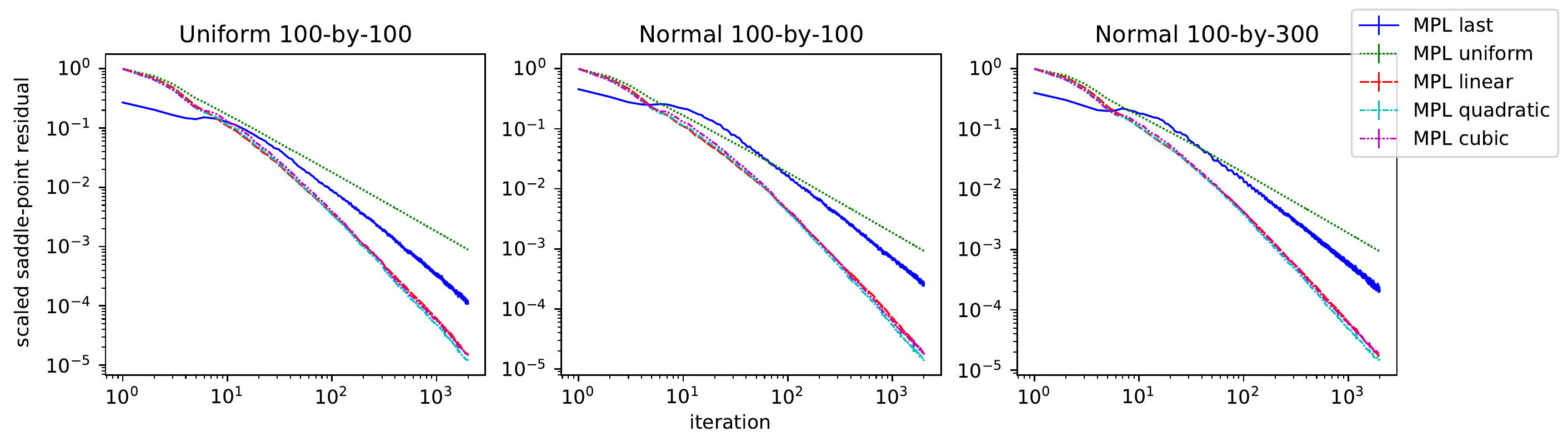}
		\caption{First-order methods with IIAS on matrix games}
		\label{fig:fom-ave-rand-mat}
		\vspace{-10px}
	\end{figure}
	
	\begin{figure}[htp]
		\begin{center}
			\includegraphics[width=0.7\columnwidth]{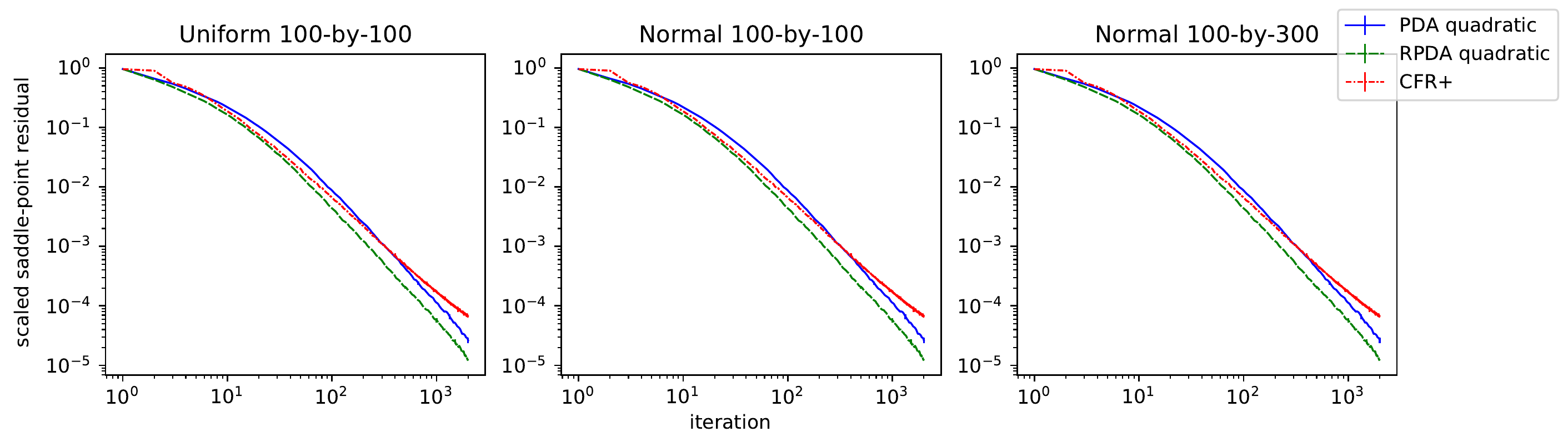} \includegraphics[width=0.5\columnwidth]{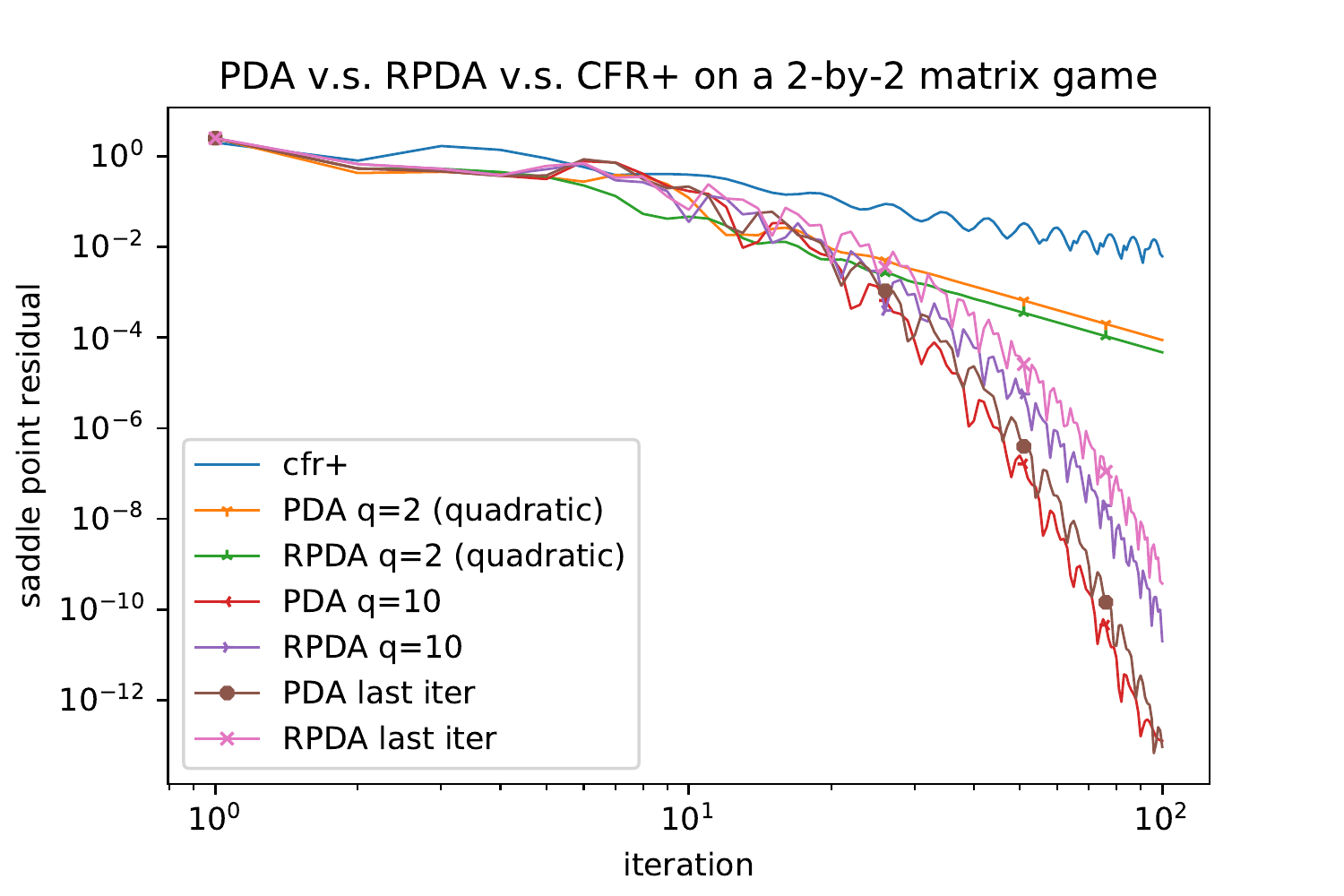}
			\caption{PDA and RPDA with IIAS v.s. CFR$^+$ for matrix games}
			\label{fig:fom-vs-cfr+}
		\end{center}
	\end{figure}
	\begin{figure}[htp]
		\begin{center}
			\includegraphics[width=0.47\columnwidth]{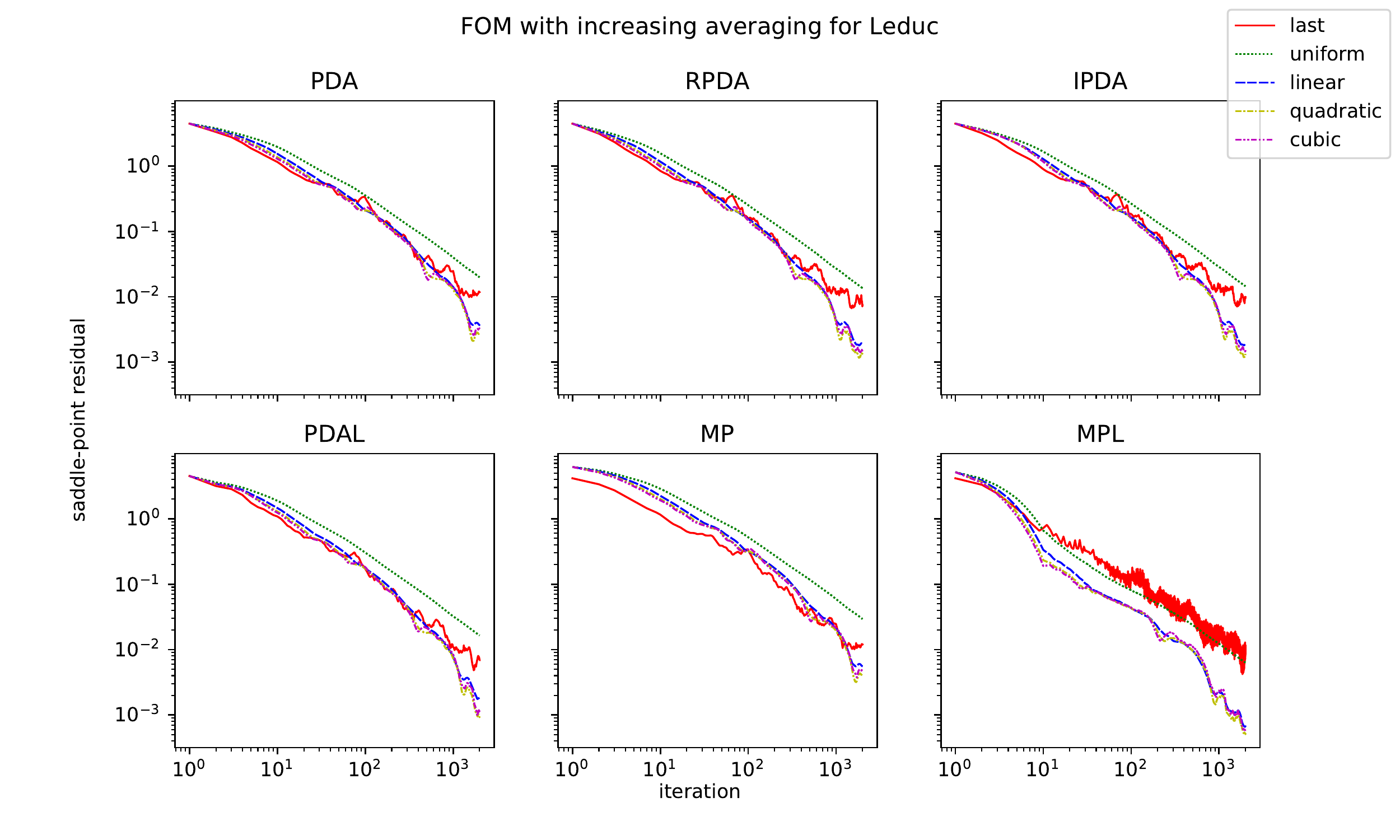}
			\includegraphics[width=0.47\columnwidth]{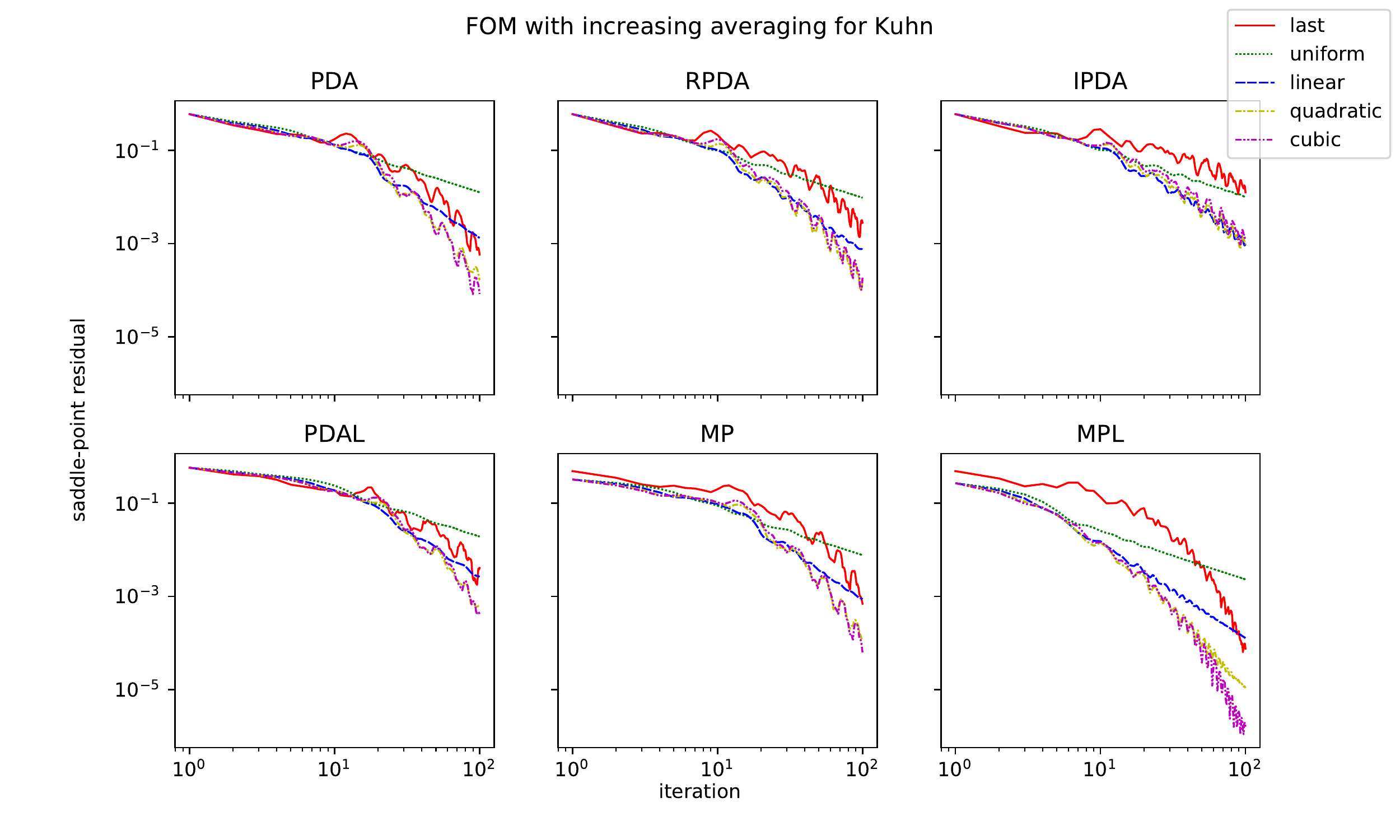}
				\vspace{-8px}
			\caption{First-order methods with IIAS for EFG}
			\label{fig:fom-with-ias-on-efg}
		\end{center}
	\vspace{-10px}
	\end{figure}

	\textbf{Market equilibrium }
	In a \textit{Fisher market}, each buyer $i$ has a \textit{valuation vector} $v_i \in \RR^m_+$ over $m$ goods. An \textit{allocation vector} $x_i\in \RR^m_+$ gives a utility of $v_i^\top x_i$ to buyer $i$. Each buyer $i$ has budget $B_i$ and each good $j$ has supply $s_j$. A \textit{competitive equilibrium} is a vector of prices $p\in \RR^m$ for the goods and an \textit{allocation} $x = [x_1, \dots, x_n]$ such that $x_i \in \argmax \left\{ v_i^\top x_i \mid p^\top x_i \leq B_i \right\}$ for all buyers $i$ and $\sum_{i} x_{ij} = s_j$ for all items $j$. To compute the equilibrium, it suffices to solve the \textit{Eisenberg-Gale convex program} \cite{eisenberg1959consensus, jain2010eisenberg}, which has a saddle-point formulation \cite{kroer2019computing}:
	\[ \min_{p\geq 0} \sum_{i=1}^n \max_{x_i\geq 0} \left[B_i\log (v_i^\top x_i) - p^\top x_i \right] + s^\top p.\] 
	We generate random instances of different sizes, solve them using PDA and compute the saddle-point residuals of the last iterates and various averages. Details of problem reformulation and random instance generation are deferred into Appendix \ref{app:detail-market-eq-exper}. Repeat each experiment $50$ times, normalize residuals and compute mean and standard deviations similar to the procedures in matrix games. Figure \ref{fig:pda-on-fisher} (left 3) displays the normalized residuals. The standard deviations have small magnitudes and thus become invisible. As the plots shows, increasing averages can converge as fast as, and sometimes even more rapidly ($q=5$ averaging in the rightmost subplot) than the last iterates.
	\begin{figure} 
		\centering
		\includegraphics[width=0.7\columnwidth]{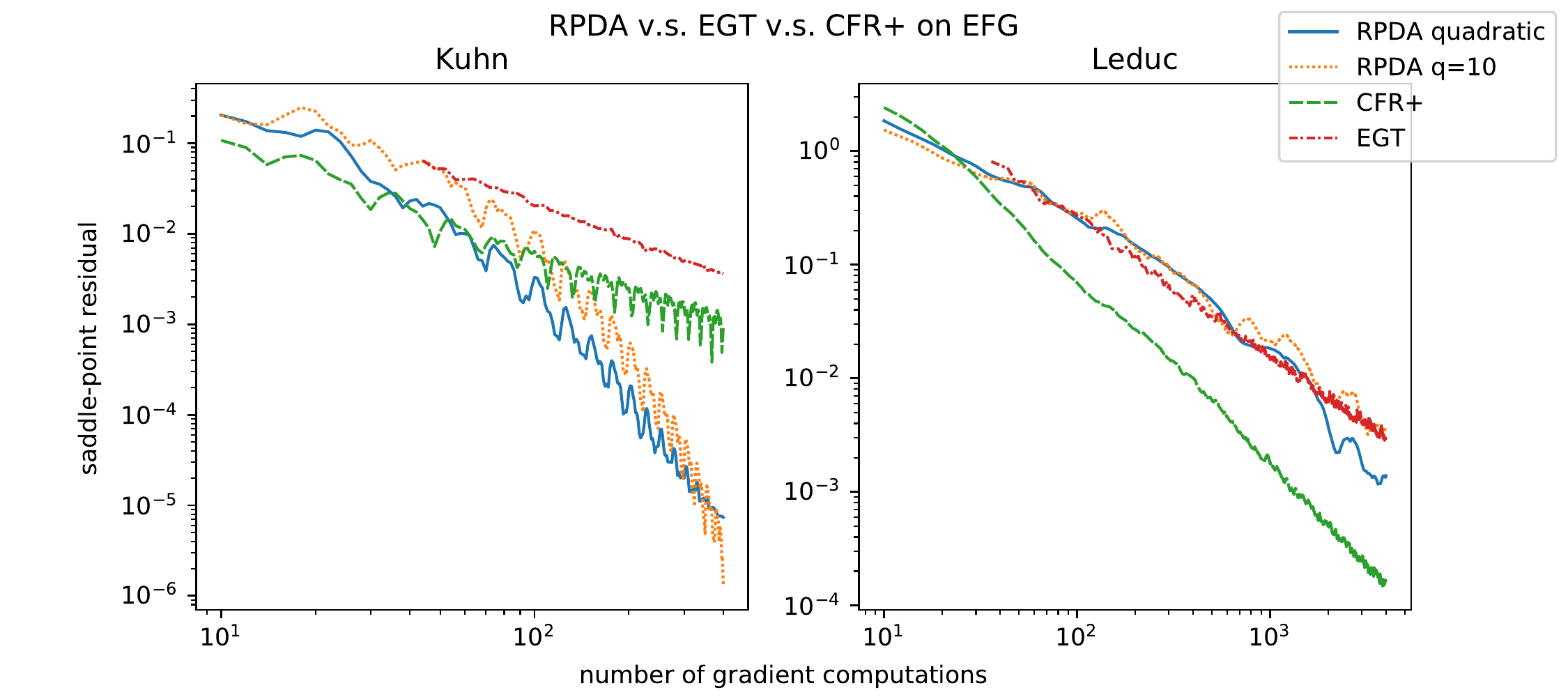}
		\caption{RPDA v.s. EGT v.s. CFR$^+$ for EFG}
		\label{fig:rpda-egt-cfr+-on-efg}
	\end{figure}

	\begin{figure}[htp]
		\begin{center}
			\includegraphics[width=0.48\columnwidth]{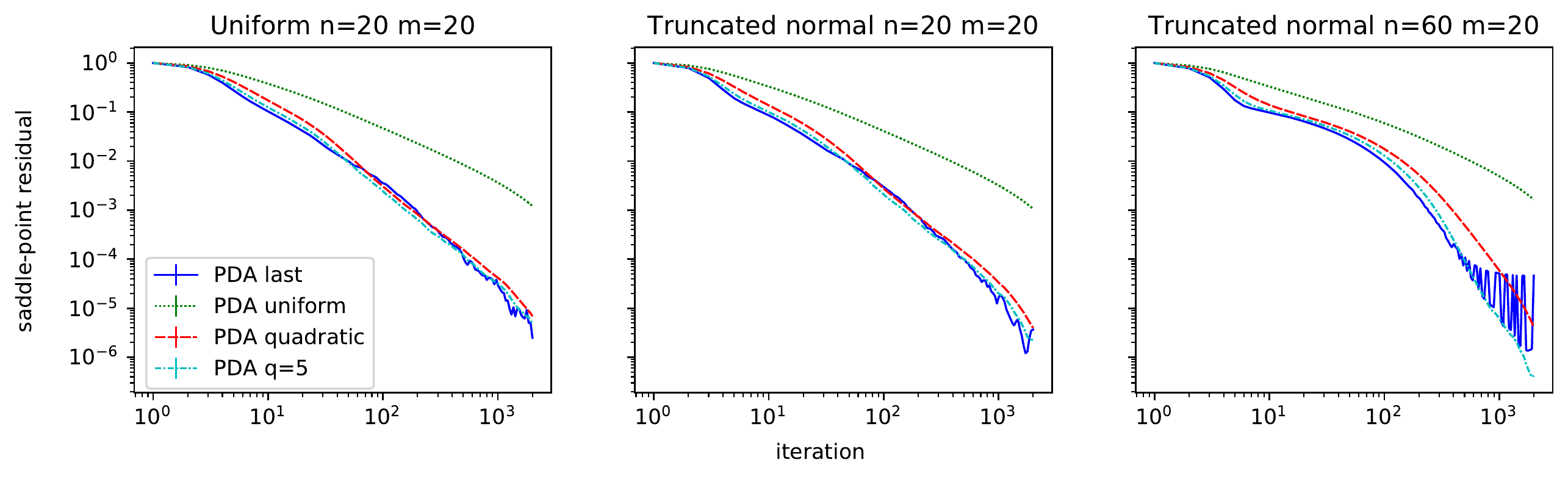} \includegraphics[width=0.48\columnwidth]{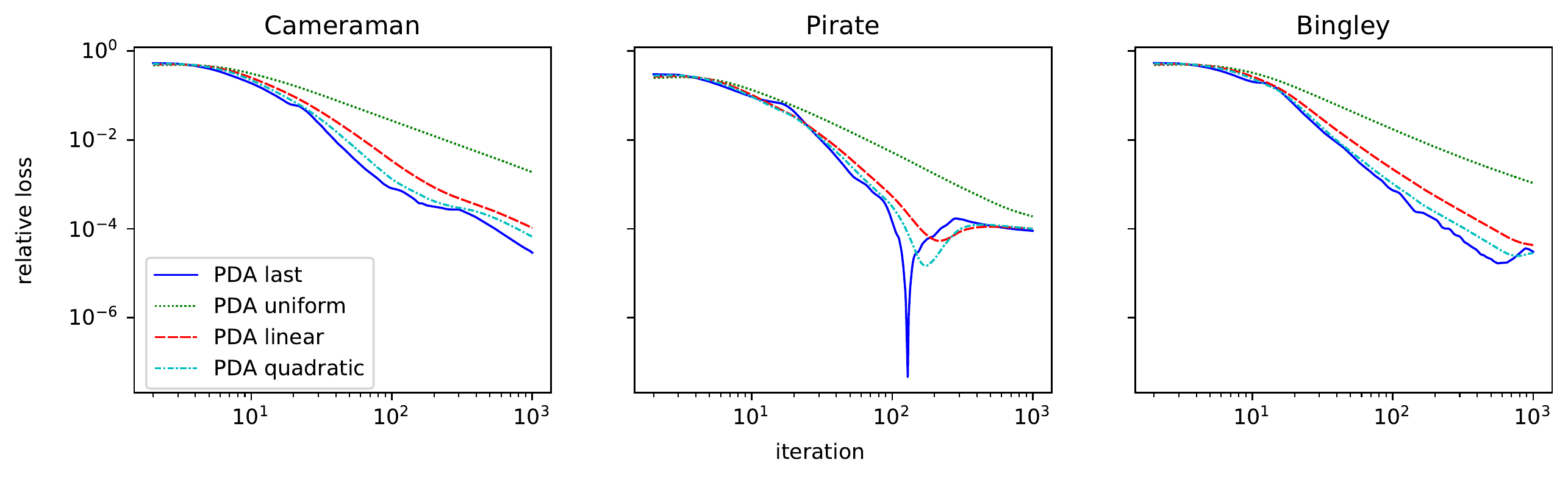}
			\caption{PDA with IIAS for equilibrium computation (left 3) and TV-$\ell_1$ minimization (right 3)}
			\label{fig:pda-on-fisher}\label{fig:pda-on-tv-l1}
		\end{center}
	\end{figure}

	\textbf{Image denoising via TV-$\ell_1$ minimization }
	The Total Variation (TV)-$\ell_1$ model is a means for image denoising through convex optimization \cite{chambolle2011first}. We use the saddle-point formulation of the convex optimization problem  \citet[pp. 132]{chambolle2011first}. Let $\bbX = \RR^{m\times n}$ be the image domain. Let $\divergence$ denote the divergence operator, that is, is the negative adjoint of the gradient operator $\nabla: \bbX \rightarrow \bbX$. Let $\bbY = X\times X = \RR^{m,n,2}$ be the set of discrete finite differences. Let $P = \{ p\in \bbY\mid (p^1_{ij})^2 + (p^2_{ij})^2 \leq 1 \}$ be the point-wise unit $\ell_2$-ball. The saddle-point formulation of the TV-$\ell_1$ model is
	\[\min_{u\in \mathcal{X}}\max_{p\in \cY} -\langle u, \divergence p\rangle + \lambda\|u-g\|_1 - \delta_P(p),\] 
	where $\lambda>0$ is the regularization strength hyperparameter. Following \citet{chambolle2011first}, to align it with \eqref{eq:sp-fgh*}, take $f=0$, $g(u) = \lambda \|u - g\|_1$ with $\lambda=1.5$ and $h^*(p) = \delta_P(p)$. In this way, the proximal mappings have closed-form formulas (see \cite[pp. 135-156]{chambolle2011first}). We add salt-and-pepper noise to three $256\times 256$-gray-scale images to obtain corrupted inputs and use the TV-$\ell_1$ minimization procedure for reconstruction. To solve the resulting saddle-point problems, we use PDA with default, static hyperparameters used in \cite{chambolle2011first} and run for $T=1000$ iterations. We compute the values of the original \textit{primal} TV-$\ell_1$ loss values of the last iterates and the various increasing averages. The loss values are normalized similarly and are displayed in Figure \ref{fig:pda-on-tv-l1} (right 3). Here, the last iterates converge fast, while linear and quadratic averages still perform nearly as well, and are far ahead of uniform averages. See Appendix \ref{app:reconstr-images} for the original, corrupted, and reconstructed (via PDA with quadratic averaging) images.\vspace{-2px}
	\section{Conclusion}\vspace{-2px}
	We proposed increasing iterate averaging schemes for various first-order methods, provided simple, implementable choices of averaging weights and established convergence properties of the new iterate averages. Extensive numerical experiments on various saddle-point problems demonstrated their ability to accelerate numerical convergence by orders of magnitude without modifying the original algorithm or incurring extra computation. We reiterate that the algorithms are run unaltered with untuned, theoretically safe hyperparameters, while the averaging weights are chosen simply based on their respective  convergence theorems. Even so, IIAS can bring the algorithms close to, and sometimes make them beat, other carefully engineered and tuned approaches.
	
%
	\bibliography{references}
	\bibliographystyle{siam}
	
	\newpage
	\appendix
	
	\section*{Appendix}
	\section{Proofs and remarks}
	\subsection{Domain boundedness} \label{app:domain-bounded}
	We remark that the domain boundedness assumption is no more restrictive than that in \cite{chambolle2016ergodic}. For simplicity, consider the Euclidean setup for norms and Bregman divergences. Denote $z = (x,y)$. Then, $A(x,y,x',y') = \|z-z'\|_{M_{\tau, \sigma}}$. 
	The original bound in \cite[Theorem 1]{chambolle2016ergodic} is $\frac{\|z - z^0\|_{M_{\tau, \sigma}}}{T}$, while our bound in Theorem \ref{thm:pd-vanilla-weighted}, if we do not apply the final upper bound by $\Omega$, is 
	\[ \frac{\sum_{t=1}^T (w_t - w_{t-1}) \|z-z^{t-1}\|_{M_{\tau, \sigma}} }{S_T}. \]
  Thus, the difference is merely whether we measure the norm bound purely in terms of $z^0$ as in \cite{chambolle2016ergodic} or if we do it based on a weighted average of iterates as in our result.
	Since both bounds are for an arbitrary $z \in \dom g \times \dom h^*$, it is unclear how either expression can be upper bounded when the domain for $z$ is unbounded, using the current framework of analysis.
	Nevertheless,
	following \cite[Remark 3]{chambolle2016ergodic}, when $\tau\sigma\|K\|^2<1$ (which is true as long as one takes $L_f>0$, which is also w.l.o.g.), it holds that the sequence $z^n = (z^n, y^n)$ is globally bounded (say by $\|z^n - z^0\|\leq D$ for all $n$, where $\|z - z'\|^2 = \|x-x'\|_\bbX^2+\|y-y'\|_\bbY^2$) and converge to a saddle point $z^*$ (which also satisfies $\|z^* - z^0\|\leq D$). 
	Therefore, 
	the problem is equivalent to one with a bounded domain, that is, $z\in \dom g \times \dom h^*$ such that $\|z-z^0\|\leq D$. This reformulation leaves at least one saddle-point unaffected. We can then use this bounded reformulation to measure saddle-point residual, that is, $\max_z \left(\cL(x^t, y) -  \cL(y, x^t) \right)$ over all $z\in \dom g \times \dom h^*$ such that $\|z-z^0\|\leq D$ instead. With this reformulation, our bound involving $\Omega$ holds again.
	
%
%
%
	\subsection{Proof of Theorem \ref{thm:relax-pd-weighted}} \label{app:proof-relax}
	By the proof of Theorem 2 in \cite{chambolle2016ergodic}, under the said assumptions, for any $z=(x,y)\in \bbX\times\bbY$ and all $t = 0, 1, 2, \dots$, the following critical inequality holds:
	\begin{align}
	\cL(\xi^{t+1}, y) - \cL(x, \eta^{t+1}) \leq \frac{1}{2\rho_t} (A_t - A_{t+1}),
	\label{eq:relax-pd-crit-ineq}
	\end{align}
	where
	\begin{align}
	A_t := \frac{1}{2}\|z-z^t\|_{M_{\tau, \sigma}}^2 = \frac{1}{2 \tau}\|x - x^t\|^2 + \frac{1}{2 \sigma}\|y - t^t\|^2  - \langle K(x-x^t), y - y^t \rangle \leq \Omega.
	\end{align} 
	Subsequently, similar to the proof of Theorem \ref{thm:pd-vanilla-weighted}, the following telescoping sum bound holds:
	\begin{align}
	\sum_{t=1}^T w_t \left( \cL(\xi^t, y)-\cL(x, \eta^t) \right) \leq \frac{1}{2} \sum_{t=1}^T \left(\frac{w_t}{\rho_{t-1}} - \frac{w_{t-1}}{\rho_{t-2}}\right) A_{t-1} \leq \frac{1}{2}\sum_{t=1}^T \frac{w_t - w_{t-1}}{\rho_{t-1}}A_{t-1} \leq \frac{\Omega}{2\rho_0} w_T,
	\label{eq:relax-pd-telescoped}
	\end{align}
	where the first inequality is the weighted sum of \eqref{eq:relax-pd-crit-ineq} by $w_{t+1}$ over $t = 0, 1, \dots, T-1$ and the second is due to $0 < \rho_{t-2} \leq \rho_{t-1}$. The theorem follows in the same way as in Theorem \ref{thm:pd-vanilla-weighted}. \qed
	
	\subsection{Proof of Theorem \ref{thm:inertial-pd-weighted}}\label{app:proof-inertial}
	Recall that \[b_t = \min \left\{\frac{1-\alpha_{t-1}}{\alpha_t}, \frac{r(1-\alpha_{t-1}) - (1+\alpha_{t-1})}{\alpha_t (1 + 2r + \alpha_t)}\right\}, \ \ r = \frac{\frac{1}{\tau} - \sigma\|K\|^2}{L_f}\] 
	and $0\leq \alpha_{t-1} \leq \alpha_t \leq \alpha<1/3$ for all $t = 1, 2,\dots$ 
	First, we show the following algebraic lemma. 
	\begin{lemma}\label{lemma:inertial-M(n)-psd}
		Under the assumptions of Theorem \ref{thm:inertial-pd-weighted}, for all $t=1, 2, \dots$, it holds that $b_t > b^* > 1$ for some $b^*$ and
		\begin{align}
		1 - \alpha_{t-1} - \frac{w_{t+1}}{w_t} \alpha_t & \geq 0 \label{eq:inertial-ineq-denom-pos}, \\
		\left(\frac{1}{\tau} - \frac{1+\alpha_{t-1} + \frac{w_{t+1}}{w_t}(\alpha_t + \alpha_t^2)}{1  - \alpha_{t-1} - 2\frac{w_{t+1}}{w_t} \alpha_t}\right) \frac{1}{\sigma} & \geq \|K\|^2.  \label{eq:inertial-M(n)-psd}
		\end{align}
		In other words, the matrix $M_t := \begin{bmatrix}
		\left(\frac{1}{\tau} -  \frac{1+\alpha_{t-1} + (\alpha_t + \alpha_t^2)\frac{w_{t+1}}{w_t}}{1 - \alpha_{t-1} - 2 \frac{w_{t+1}}{w_t} \alpha_t } L_f  \right) I & - K^* \\ -K & \frac{1}{\sigma} I 
		\end{bmatrix} $ is positive semidefinite for all $t$. 
	\end{lemma}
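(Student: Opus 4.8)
\emph{Proof proposal.} The plan is to treat the three claims in turn, reducing everything to the single structural fact that, by construction, $\gamma_t := w_{t+1}/w_t = \min\{b_t,\,(t+1)^q/t^q\} \le b_t$, so $\gamma_t$ is dominated by \emph{both} quantities in the minimum defining $b_t$. First I would record two consequences of the hypotheses. Rewriting the strict condition $\left(\frac{1}{\tau} - \frac{(1+\alpha)^2}{1-3\alpha}L_f\right)\frac{1}{\sigma} > \|K\|^2$ in terms of $r=\frac{1/\tau-\sigma\|K\|^2}{L_f}$ turns it into $rL_f>\frac{(1+\alpha)^2}{1-3\alpha}L_f$, i.e.
\[ r > \frac{(1+\alpha)^2}{1-3\alpha} > 0, \]
so in particular $r>0$. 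Second, I would reduce the matrix claim to the scalar one: since $\frac1\sigma>0$, the block matrix $M_t=\begin{bmatrix} aI & -K^*\\ -K & \frac1\sigma I\end{bmatrix}$ with $a=\frac1\tau-\frac{1+\alpha_{t-1}+(\alpha_t+\alpha_t^2)\gamma_t}{1-\alpha_{t-1}-2\gamma_t\alpha_t}L_f$ is positive semidefinite exactly when its Schur complement $aI-\sigma K^*K\succeq 0$ holds, i.e. when $a\cdot\frac1\sigma\ge\|K\|^2$, which is precisely \eqref{eq:inertial-M(n)-psd}. Thus it suffices to establish \eqref{eq:inertial-ineq-denom-pos}, \eqref{eq:inertial-M(n)-psd}, and the uniform bound $b_t>b^*>1$.

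For the uniform lower bound and for \eqref{eq:inertial-ineq-denom-pos} I would bound the two entries of the minimum separately, using only $0\le\alpha_{t-1}\le\alpha_t\le\alpha<\frac13$. The first entry satisfies $\frac{1-\alpha_{t-1}}{\alpha_t}\ge\frac{1-\alpha}{\alpha}>2$ (and is $+\infty$ if $\alpha_t=0$); since $\gamma_t\le b_t\le\frac{1-\alpha_{t-1}}{\alpha_t}$, multiplying by $\alpha_t$ gives $\gamma_t\alpha_t\le 1-\alpha_{t-1}$, which is \eqref{eq:inertial-ineq-denom-pos}. For the second entry $c_t:=\frac{r(1-\alpha_{t-1})-(1+\alpha_{t-1})}{\alpha_t(1+2r+\alpha_t)}$ I would expand $c_t-1$, bound its numerator $r(1-\alpha_{t-1}-2\alpha_t)-(1+\alpha_{t-1}+\alpha_t+\alpha_t^2)\ge r(1-3\alpha)-(1+\alpha)^2>0$ (using $r>0$ and the monotone bounds on the $\alpha$'s) and its denominator $\le\alpha(1+2r+\alpha)$, producing a strictly positive, $t$-independent gap $c_t-1\ge\varepsilon_0:=\frac{r(1-3\alpha)-(1+\alpha)^2}{\alpha(1+2r+\alpha)}$. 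Taking $b^*=\min\{\frac{1-\alpha}{\alpha},\,1+\varepsilon_0\}>1$ then gives $b_t>b^*>1$ for all $t$.

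It remains to prove \eqref{eq:inertial-M(n)-psd}, and the hard part is that clearing the fraction by cross-multiplication is only legitimate once the denominator $1-\alpha_{t-1}-2\gamma_t\alpha_t$ is known to be \emph{positive}: note that \eqref{eq:inertial-ineq-denom-pos} only controls the weaker quantity $1-\alpha_{t-1}-\gamma_t\alpha_t$, so this is exactly the step that forces $b_t$ to be the minimum of the two expressions rather than just the first. To handle it I would use $\gamma_t\le c_t$ to bound $2\gamma_t\alpha_t\le 2c_t\alpha_t=\frac{2[r(1-\alpha_{t-1})-(1+\alpha_{t-1})]}{1+2r+\alpha_t}$ and then check $2c_t\alpha_t<1-\alpha_{t-1}$, which after clearing the positive factor $1+2r+\alpha_t$ reduces to $-2(1+\alpha_{t-1})<(1-\alpha_{t-1})(1+\alpha_t)$ — trivially true, the left side being negative and the right positive. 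With the denominator positive, cross-multiplying the target $r\ge\frac{1+\alpha_{t-1}+(\alpha_t+\alpha_t^2)\gamma_t}{1-\alpha_{t-1}-2\gamma_t\alpha_t}$ and collecting the $\gamma_t$ terms collapses it exactly to $\gamma_t\le c_t$, which holds by construction. (The degenerate case $\alpha_t=0$ is immediate, since then \eqref{eq:inertial-M(n)-psd} reads $r\ge\frac{1+\alpha_{t-1}}{1-\alpha_{t-1}}$, and $\frac{1+\alpha}{1-\alpha}\le\frac{(1+\alpha)^2}{1-3\alpha}<r$ because $\alpha<\frac13$.) Assembling the three parts yields $M_t\succeq 0$ for every $t$.
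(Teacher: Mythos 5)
Your proposal is correct and follows essentially the same route as the paper: bound each of the two expressions in the minimum defining $b_t$ from below by a $t$-independent constant exceeding $1$, obtain \eqref{eq:inertial-ineq-denom-pos} from $\frac{w_{t+1}}{w_t}\le b_t\le\frac{1-\alpha_{t-1}}{\alpha_t}$, and obtain \eqref{eq:inertial-M(n)-psd} by showing that $\frac{w_{t+1}}{w_t}\le c_t$ is exactly the cross-multiplied form of the desired bound on $r$. The only (welcome) difference is that you make explicit two points the paper leaves implicit --- the Schur-complement reduction of the PSD claim to the scalar inequality, and the strict positivity of the denominator $1-\alpha_{t-1}-2\frac{w_{t+1}}{w_t}\alpha_t$, which the paper handles by asserting monotonicity of $\ell(\cdot)$ on the interval where that denominator is positive.
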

	\begin{proof}
		Assume $L_f > 0$ without loss of generality. First, we show $b_t > 1$. Clearly, 
		\[ \frac{1-\alpha_{t-1}}{\alpha_t} \geq \frac{1 - \alpha_t}{\alpha_t} > 2. \]
		Meanwhile, 
		\begin{align*}
			& \left(\frac{1}{\tau} - \frac{(1+\alpha)^2}{1-3\alpha} L_f \right) \frac{1}{\sigma} > \|K\|^2 \\
			 & \Rightarrow r > \frac{(1+\alpha)^2}{1-3\alpha} \quad [*] \\& 
			\Rightarrow r(1-\alpha_{t-1}) - (1+\alpha_{t-1}) >  r(1-3 \alpha) - (1+\alpha)^2 \geq 0.
		\end{align*}
		By $0\leq \alpha_{t-1}\leq \alpha_t \leq \alpha<1/3$ and $[*]$,
		\begin{align*}
			& \frac{r(1-\alpha_{t-1}) - (1+\alpha_{t-1})}{\alpha_t(1 + 2r + \alpha_t)} \geq s:= \frac{r(1-\alpha) - (1+\alpha)}{2\alpha r + \alpha(1+ \alpha)}  \\ 
			& > \frac{ (1-\alpha)\cdot \frac{(1+\alpha)^2}{1-3\alpha} - (1+\alpha)}{2\alpha\cdot \frac{(1+\alpha^2)}{1 - 3\alpha} + \alpha(1+\alpha)} = \frac{(1-\alpha)(1+\alpha) - (1-3\alpha) }{2\alpha(1+\alpha) + \alpha(1-3\alpha)} = 1.
		\end{align*}
		Therefore, 
		\[b_t > b^*:= \min\left\{ 2, s \right\} > 1.\]
		Next, \eqref{eq:inertial-ineq-denom-pos} follows from $b_t \leq \frac{1-\alpha_{t-1}}{\alpha_t}$. Meanwhile, $b_t \leq \frac{r(1-\alpha_{t-1}) - (1+\alpha_{t-1})}{\alpha_t (1 + 2r + \alpha_t)}$ implies, via simple rearranging, 
		\[ \ell(b_t) :=  \frac{1+\alpha_{t-1} + b_t (\alpha_t + \alpha_t^2)}{1 - (\alpha_t + 2b_t \alpha_t)} \leq r = \frac{\frac{1}{\tau} - \sigma\|K\|^2}{L_f}, \]
		where $\ell(\cdot)$ is monotone increasing on $\left[1, \frac{1-\alpha_t}{2\alpha_t}\right]$. Therefore, the choice of the weights
		\[ w_{t+1} = w_t \cdot \min\left\{ b_t, \frac{(t+1)^q}{t^q} \right\} \]
		ensures $\frac{w_{t+1}}{w_t} \leq b_t$, which further implies $\ell(\frac{w_{t+1}}{w_t}) \leq \ell(b_t) \leq r$. This is just \eqref{eq:inertial-M(n)-psd} rearranged. 
	\end{proof}
	
	We then prove the theorem, that is, for all $(x, y)\in \bbX \times \bbY$,
	\begin{align}
		\cL(x, \bar{y}^T) - \cL(\bar{x}^T, y) \leq \frac{(1-\alpha_0)w_1 A_0 + \Omega \left[ (1-\alpha_{T-1})w_T + \alpha_1 w_2  - w_1 \right]}{S_T} \leq \frac{(q+1)(2-\alpha_0) \Omega }{T}. \label{eq:inertial-two-bounds}
	\end{align}
	First, $w_t$ is monotone increasing since $b_t > 1$ by Lemma \ref{lemma:inertial-M(n)-psd}.  Define $z^{-1} = z^0$. For $n \geq 0$, denote $A_t = \frac{1}{2}\|z-z^t\|^2_{M_{\tau,\sigma}}$, $B_t = \frac{1}{2}\|z^t - z^{t-1}\|^2_{M_{\tau,\sigma}}$ and $C_t = \frac{1}{2}\|x^t - x^{t-1}\|^2$. Note that $B_0 = C_0 = 0$ and $A_{-1} = A_0$. By the proof of Theorem 3 in \cite{chambolle2016ergodic}, the following critical inequality holds for any $t\geq 0$ and any $z = (x,y)\in \bbX \times \bbY$:
	\begin{align}
	& \cL(x^{t+1}, y) - \cL(x, y^{t+1})  \nnnl
	& \leq (A_t - A_{t+1}) + \alpha_t(A_t - A_{t-1}) + (\alpha_t-1) B_{t+1} + 2 \alpha_t B_t + L_f \left( (1+\alpha_t)C_{t+1} + (\alpha_t + \alpha_t^2) C_t \right). \label{eq:inertial-pd-criti-ineq}
	\end{align}
	
	Multiplying \eqref{eq:inertial-pd-criti-ineq} by $w_{t+1}$, summing over $t = 0, \dots, T-1$, and rearranging the right hand side yield
	\begin{align}
	& \sum_{t=1}^{T} w_t(\cL(x^t, y) - \cL(x, y^t)) \nnnl 
	& \leq \sum_{t=1}^{T-1} (w_{t+1} - w_t)A_t + ( w_1 A_0 - w_T A_T) \nnnl 
	& \quad + \sum_{t=0}^{T-2} (\alpha_t w_{t+1} - \alpha_{t+1}w_{t+2}) A_t + ( - \alpha_0 w_1 A_{-1}  + \alpha_{T-1} w_T A_{T-1}) \nnnl 
	& \quad + \sum_{t=1}^{T-1} \left[  \left((\alpha_{t-1} - 1)w_t + 2\alpha_t w_{t+1} \right) B_t + L_f\left((1+\alpha_{t-1}) w_t + (\alpha_t + \alpha_t^2) w_{t+1}  \right) C_t \right] \nnnl
	& \quad + (\alpha_{T-1} - 1) w_T B_T + 2 \alpha_0 w_1 B_0 + L_f \left[ (1+\alpha_{T-1} )w_T C_T + (\alpha_0 + \alpha_0^2) w_1 C_0 \right]. \label{eq:inertial-weighted-sum-ineq}
	\end{align}
	Then, we simplify and bound the summation terms and ``leftover'' terms separately. First, recall that $\alpha_0 w_1 B_0 = 0$ and $(\alpha_0 + \alpha_0^2) w_1 C_1 = 0$. Since $A_t \leq \Omega$ for all $t$, we have 
	\begin{align}
	\sum_{t=1}^{T-1} (w_{t+1} - w_t)A_t +  \sum_{t=1}^{T-2} (\alpha_t w_{t+1} - \alpha_{t+1}w_{t+2}) A_t  \leq \Omega\left[ (1-\alpha_{T-1})w_T + \alpha_1 w_2  - w_1 \right]. \label{eq:inertial-A(t)-bound}
	\end{align}
	Meanwhile, straightforward computation verifies that
	\begin{align}
	\sum_{t=1}^{T-1} \left[ \left((\alpha_{t-1} - 1)w_t + 2\alpha_t w_{t+1} \right) B_t + L_f \left((1+\alpha_{t-1}) w_t + (\alpha_t + \alpha_t^2) w_{t+1}  \right) C_t \right]  \nnnl
	=  - \sum_{t=1}^{T-1} w_t \left(1 - \alpha_{t-1} - 2\frac{w_{t+1}}{w_t}\alpha_t\right) \| z^t - z^{t-1}\|^2_{M_t}  \leq 0, \label{eq:inertial-M(n)-bound}
	\end{align}
	where the last inequality is due to \eqref{eq:inertial-ineq-denom-pos} and 
	\eqref{eq:inertial-M(n)-psd} in Lemma \ref{lemma:inertial-M(n)-psd}.
	
	Using the inequality $\frac{1}{2}\|a+b\|^2 \leq \|a\|^2 + \|b\|^2$, we have \[\frac{1}{2}A_{T-1} \leq A_T + B_T.\]
	Therefore, 
	\begin{align}
	\alpha_{T-1} w_T A_{T-1} - w_T A_T + (\alpha_{T-1}-1) w_T B_T + L_f (1+\alpha_{T-1}) w_T C_T \nnnl
	\leq w_T\left[ (2\alpha_{T-1}-1) A_T + (3\alpha_{T-1}-1) B_T + L_f(\alpha_{T-1} + 1)C_T \right] \leq 0, \label{eq:inertial-leftover-terms-bound}
	\end{align}
	where the last inequality follows from $\frac{2\alpha_{T-1}-1}{2} < 0$ and 
	\[(3\alpha_{T-1}-1)B_T + L_f (\alpha_{T-1}+1) C_T = - (1- 3\alpha_{T-1})\|z - z^T\|_P \leq 0, \]
	where $P := \begin{bmatrix}
	(\frac{1}{\tau} - \frac{1+\alpha_{T-1}}{1-3\alpha_{T-1}} L_f)I & -K^* \\ -K & \frac{1}{\sigma}I 
	\end{bmatrix}$ is positive semi-definite since, by $\alpha_{T-1}\leq \alpha < 1/3$,
	\[ \left( \frac{1}{\tau} - \frac{1+\alpha_{T-1}}{1 - 3\alpha_{T-1}}L_f \right) \frac{1}{\sigma} \geq \left( \frac{1}{\tau} - \frac{(1+\alpha)^2}{1 - 3\alpha}L_f \right)\frac{1}{\sigma} \geq \|K\|^2. \]
		
	Now, the only ``untreated'' terms on the right hand side of \eqref{eq:inertial-weighted-sum-ineq} are $w_1 A_0$ and $-\alpha_0 w_1 A_{-1} = -\alpha_0 w_1 A_0$. Combining \eqref{eq:inertial-weighted-sum-ineq}, \eqref{eq:inertial-A(t)-bound}, \eqref{eq:inertial-M(n)-bound} and \eqref{eq:inertial-leftover-terms-bound}, we have
	\begin{align*}
	\sum_{t=1}^T w_t ( \cL(x^t, y) - \cL(x, y^t) ) \leq (1-\alpha_0)w_1 A_0 + \Omega \left[ (1-\alpha_{T-1})w_T + \alpha_1 w_2  - w_1 \right].
	\end{align*} 
	The first inequality in \eqref{eq:inertial-two-bounds}
	follows from the convex-concave structure of $\cL$.  
	Furthermore,
	 \[ (1-\alpha_0)w_1 A_0 + \Omega \left[ (1-\alpha_{T-1})w_T + \alpha_1 w_2  - w_1 \right] \leq (1-\alpha_0)\Omega + \Omega w_T \leq (2 - \alpha_0) w_T \Omega.\]
	By the choice of $w_t$, for $1\leq t\leq T-1$, one has $1\leq \frac{w_T}{w_t} \leq \frac{(t+1)^q}{t^q}\cdots \frac{T^q}{(T-1)^q} = \frac{T^q}{t^q}$. Therefore, 
	\begin{align*}
	\frac{w_T}{S_T} & = \frac{1}{\sum_{t=1}^T \frac{w_t}{w_T}} \leq \frac{1}{ \sum_{t=1}^T \frac{t^q}{T^q} } = \frac{T^q}{\sum_{t=1}^T t^q} \leq \frac{q+1}{T}
	\end{align*}  
	and the second inequality in \eqref{eq:inertial-two-bounds} follows.
	\qed 
	
	\subsection{Proof of Theorem \ref{thm:pdal}}\label{app:proof-pdal}
	For any $(x', y')\in \bbX \times \bbY$, denote 
	\begin{align*}
	P(x') &= g(x') - g(x) + \langle K^* y,x' - x \rangle, \\
	D(y') &= h^*(y') - h^*(y) - \langle K x, y' - y\rangle.
	\end{align*}
	Note that $P(\cdot)$ and $D(\cdot)$ are both convex. Denote 
	\[E_t = \frac{1}{2}(\|x^t - x\|^2 + \frac{1}{\beta} \|y^{t-1} - y\|^2 ),\] 
	which clearly satisfies \[E_t \leq \Omega_\bbX + \frac{1}{\beta}\Omega_\bbY\] 
	by assumption. 
	By the proof of Theorem 3.4 in \cite{malitsky2018first}, the following critical inequality hold for any $t\geq 1$ and any $(x, y)\in \bbX \times \bbY$:\footnote{In \citet{malitsky2018first}, the authors assume $(x,y)$ is a saddle point of \eqref{eq:sp-fgh*} (with $f = 0$). However, the statement and proof of their Theorem 3.5 in fact hold for general $(x,y)$. The saddle point assumption is needed in establishing sequence convergence \cite[Theorem 3.4]{malitsky2018first}.}
	\begin{align}
	\tau_t \left[ (1+\theta_t)P(x^t) - \theta_t P(x^{t-1}) + D(y^t) \right] \leq E_t - E_{t+1}. 
	\label{eq:crit-ineq-pdal}
	\end{align}
	Multiplying both sides by $w_t$ and summing up over $t = 1, \dots, T$ yield 
	\begin{align}
	A + B \leq \sum_{t=1}^T w_t (E_t - E_{t+1}) \leq \sum_{t=1}^T (w_t - w_{t-1}) E_t \leq \left(\Omega_\bbX + \frac{1}{\beta}\Omega_\bbY\right) w_T, \label{eq:pdal-proof-A+B<=}
	\end{align}
	where
	\begin{align*}
	A &= \sum_{t=1}^T w_t \tau_t\left[ (1+\theta_t) P(x^t) - \theta_t P(x^{t-1}) \right] \nnnl
	&= - w_1 \tau_1 \theta_t P(x^0) + w_T \tau_T (1+ \theta_T)P(x^T) + \sum_{t=2}^T \left[ (1+\theta_{t-1}) w_{t-1}\tau_{t-1} - \theta_t w_t \tau_t \right] P(x^{t-1}), \nnnl
	B &= \sum_{t=1}^T w_t \tau_t D(y^t).
	\end{align*}
	
	By the algorithm, $\theta_t = \frac{\tau_t}{\tau_{t-1}}$. By the choice of $w_t$, we have $\frac{w_t}{w_{t-1}} \leq \frac{1+\theta_{t-1}}{\theta_t^2}$. Therefore, it holds that 
	\begin{align}
	(1 + \theta_{t-1}) w_{t-1}\tau_{t-1} \geq \theta_t^2 w_t \tau_{t-1} \geq \theta_t w_t \tau_t. \label{eq:pdal-proof-nonneg-coeff}
	\end{align}
	By the definition of $\tilde{x}^t$ and rearrangement of terms, it holds that
	\begin{align*}
	& w_T \tau_T (1+ \theta_T) + \sum_{t=2}^T \left[ (1+\theta_{t-1}) w_{t-1}\tau_{t-1} - \theta_t w_t \tau_t \right] = w_1 \tau_1 \theta_1 + S_T \nnnl
	& w_T \tau_T (1+ \theta_T) x^T + \sum_{t=2}^T \left[ (1+\theta_{t-1}) w_{t-1}\tau_{t-1} - \theta_t w_t \tau_t \right]x^{t-1} = w_1 \theta_1 \tau_1 x^0 + \sum_{t=1}^T w_t \tau_t \tilde{x}^t.
	\end{align*}
	By \eqref{eq:pdal-proof-nonneg-coeff} (which ensures the coefficients of $x^{t-1}$  in the above summation are nonnegative), convexity of $P(\cdot)$ and the above identities, one has
	\begin{align}
	A \geq \left(w_1 \tau_1 \theta_1 + S_T \right) P(\bar{x}^T) - w_1 \tau_1 \theta_1 P(x^0) \geq S_T P(\bar{x}^T) - w_1 \tau_1 \theta_1 P(x^0). \label{eq:pdal-proof-A>=...}
	\end{align}
	Meanwhile, convexity of $D(\cdot)$ implies
	\begin{align}
	B\geq S_T D(\bar{y}^T). \label{eq:pdal-proof-B>=...}
	\end{align}
	Finally, substituting \eqref{eq:pdal-proof-A>=...} and \eqref{eq:pdal-proof-B>=...} into \eqref{eq:pdal-proof-A+B<=} gives ($P_0 = P(x^0)$)
	\[ \cL(\bar{x}^T, \bar{y}^T) = P(\bar{x}^T) + D(\bar{y}^T) \leq \frac{\left(\Omega_\bbX + \frac{1}{\beta}\Omega_\bbY\right) w_T + w_1 \tau_t \theta_t P_0}{S_T}, \]
	proving the inequality on the left. The one on the right can be seen via $\frac{w_T}{S_T} \leq \frac{q+1}{T}$ (see Appendix \ref{app:proof-inertial}).
	 \qed
	
	\subsection{Proof of Theorem \ref{thm:mp}}\label{app:proof-md}
	We only need to prove the second part when $\tau_t = \frac{1}{T}$, since the rest of the theorem is Theorem 5.6.1 in \cite{ben2019lectures}. By this theorem, $\tau_t = \frac{1}{T}$ implies $\delta_t \leq 0$. Therefore, the right hand side 
	\[ \phi(\bar{x}^T, y) - \phi(x, \bar{y}^T) \leq \frac{w_T \Omega + \sum_{t=1}^T w_t \delta_t }{S_T} \]
	reduces to $\frac{w_T \Omega}{S_T}$. By the choice of $w_t$, we have $\frac{w_T}{S_T}  = \frac{L T^q}{\sum_{t=1}^T t^q}\leq \frac{(q+1)L}{T}$, where the last inequality is the same as in the proof of Theorem 1. 
	
	\section{Further details on numerical experiments}\label{app:details-num-exper}
	\subsection{Matrix game and extensive-form game experiments}\label{app:details-game-exper}
	\textbf{Random matrix generation\ } For random matrix games, matrices have i.i.d. entries distributed as either $\frac{1}{2}{\mathcal{U}(0,1)}-1$ (``uniform'') or $\mathcal{N}(0,1)$ (''normal''), where $\mathcal{U}(0,1)$ is the uniform distribution on $[0,1]$ and $\mathcal{N}(0,1)$ is the standard normal distribution. We generate $100\times 100$-uniform, $100\times 100$-normal and $100\times 300$-normal random matrices, as shown in Figure \ref{fig:fom-ave-rand-mat}.
	
	\textbf{Algorithm hyperparameters} For both random matrix games and EFG, the hyperparameters of the algorithms are set as follows. for PDA, RPDA, IPDA, $\tau = \sqrt{\frac{1 - 1/n_2}{1 - 1/n_1}} \alpha$, $\sigma =  \sqrt{\frac{1 - 1/n_1}{1 - 1/n_2}} \alpha$, where $\alpha = 0.99/L$ and $L$ is the oeprator norm of $K=A^\top$. for RPDA, the relaxation parameter is fixed at $\rho_t = \rho = 1.5$; for IPDA, set $\alpha_t = \alpha = 0.3$; for PDAL, set $\mu = 0.2$, $\delta = 0.8$ and $\beta=1$ throughout \cite{malitsky2018first}. For MP, $\tau_t = \tau = 1/L$. For MPL, using the notation on page 443 of \cite{ben2019lectures}, further set the aggressive stepsizing multipliers to $\theta^+ = 1.2$, $\theta^- = 0.8$, and $\tau_{\rm safe} = 1/L$ (we use $\tau$ for stepsize, which is $\gamma$ therein).
	
	For IPDA, sice $f=0$, we have $L_f = 0$. Therefore, by the choice of $b_t$ in Theorem \ref{thm:inertial-pd-weighted}, we can take $b_t = \frac{1-\alpha}{2\alpha} = 7/6$. Then, we choose $w_t$ according to the recursive formula \ref{eq:inertial-pd-weights-choice}. Note that this does not affect the eventual polynomial growth of $w_t$. For example, for quadratic averaging, since $(16/15)^2 <  7/6$, we have $\frac{w_{t+1}}{w_t} = \frac{(t+1)^2}{t^2}$for all $t\geq 15$. For PDAL, the weights are chosen according to the formula in Theorem \ref{thm:pdal}, that is,
	$w_{t+1} = w_t \cdot \min\left\{ \frac{1+\theta_t}{\theta_{t+1}}, \frac{(t+1)^q}{t^q} \right\}$
	for all values of $q$. Note that they are nondecreasing but may not necessarily grow polynomially eventually, unlike IPDA.
	
	\textbf{Normalizing the saddle-point residuals } The randomly generated matrix games can have varying difficulty - some are much easier to solve (that is, to get a solution with very small saddle-point residual) than others. Therefore, we normalize the saddle-point residuals across different matrix games as follows. For each generated matrix game and each algorithm, let the computed saddle-point residuals of the increasing averages be $\epsilon^q_1, \dots, \epsilon^q_{T}$ (also compute them for the last iterates, ``$q = {\rm last}$''). in order to make the sample paths comparable, we normalize each residual $\epsilon^q_t$ to $\frac{\epsilon^q_t - \epsilon_*}{\epsilon^*}$, where $\epsilon_* := 0.5\cdot \min_{q, t} \epsilon^q_t$ and $\epsilon^* = \max_q \epsilon^q_1$. In this way, we are comparing the \textit{relative} performance of different algorithms with different averaging schemes. 
	
	For the game \verb|Kuhn|, each algorithm is run for $T=100$ iterations; for \verb|Leduc|, each algirhtm is run for $T=2000$ iterations.

	\subsection{Numerical stability of iterative average updates} \label{app:detail-large-q}
	In our Python implementation, the weights and averages are maintained as follows. Suppose we use PDA with IIAS, where the weight exponent is $q$. At iteration $t$, we have kept the previous average \verb|x_ave| (i.e., $\bar{x}_{t-1}$), previous weight sum \verb|S| (i.e., $S_{t-1}:= \sum_{\tau=1}^{t-1}w_\tau^q$) and just computed the current iterate \verb|x| (i.e., $x_t$). The average and sum of weights are updated as follows: 
	\begin{verbatim}
	w = t**q
	S_new = S + w; 
	x_ave[t] = x_ave[t-1] * S / S_new + x[t] * w / S_new
	S = S_new
	\end{verbatim}
	When $t = 4000$ and $q = 10$ (both are larger than any possible scenario in our experiments), we can emulate the above (only computing the weights and weight sum, no iterate average update) via the following code snippet. 
	\begin{verbatim}
	T = 4000
	S = 0
	q = 10
	for t in range(1, T+1):
	w = t ** q
 	S_new = S + w
	if t == T:
	print("S/S_new = {}, w/S_new = {}".format(S/S_new, w/S_new))
	S = S_new
	\end{verbatim}
	Using Python 3 in Google Colab, it prints the ratios in the last iteration, which are roughly 0.99725 and 0.00275. These numbers are of reasonable magnitude and do not lead to numerical issues. Again, we emphasize that $q=1,2$ already brings significant numerical speedup and these are the values that we suggest practitioners to use.
	
	\subsection{Details of the equilibrium computation experiments} \label{app:detail-market-eq-exper}
	\textbf{Reformulation of the Eisenberg-Gale saddle-point formulation into \eqref{eq:sp-fgh*}\ } The reformulation here is based on \cite{kroer2019computing}. They show that, in equilibrium, each buyer $i$ is guaranteed at least their \textit{proportional} allocation under any feasible set of prices. Therefore, for each buyer $i$, there exists a \textit{proportional shares} $\gamma_i>0$ such that $v_i^\top x_i \geq \gamma_i$ under equilibrium. Denote the resulting set of allocation vectors for buyer $i$ as $U_i = \{ u\in \RR^m\mid v_i^\top u \geq \gamma_i,\, u \geq 0 \}$ and denote the indicator function of a set $C$ as $\delta_{C}\in \{0, \infty\}$. Then, the saddle-point problem is equivalent to
	\[ 	\min_{p\geq 0} \left[\sum_{i=1}^n \max_{x_i\in U_i} \left(B_i\log (v_i^\top x_i) - p^\top x_i\right) + s^\top p\right]. \]
	
	Note that we can impose upper bounds on both $x$ and $p$ without affecting the equilibrium. in fact, any equilibrium solution must satisfy $0 \leq x_i \leq s$ and $0 \leq p \leq \bar{p}$ for some $\bar{p}>0$ \cite{kroer2019computing}. Therefore, by Sion's minmax theorem, we can interchange min and max:
	\[ \max_{x_i \in U_i} \min_{p\geq 0} \left[ \sum_{i=1}^n \left( B_i \log(v_i^\top x_i) - p^\top x_i  \right) + s^\top p \right]. \]
	
	To align with \eqref{eq:sp-fgh*}, simply negating the signs:
	\[ \min_{x_i \in U_i} \max_{p\geq 0} \left[ -\sum_{i=1}^n \left( B_i \log(v_i^\top x_i) - p^\top x_i  \right) - s^\top p \right], \]
	where we set \[f(x) = -\sum_{i=1}^n B_i \log(v_i^\top x_i),\ g(x) = \sum_i \delta_{U_i}(x_i)\ h^*(p) = -s^\top p,\] 
	and $K$ to be the matrix encoding \[(x,p)\mapsto \sum_i p^\top x_i.\] 
	Here, indeed, $f$ is convex and smooth, whose Lipschitz constant can be bounded by $\sqrt{n} + \max_{i,j} v_{ij} \frac{B_i}{\gamma_i}$ \cite{kroer2019computing}. We then use PDA to solve the reformulated problem.
	
	\textbf{Random instance generation\ } We generate 3 types of random instances: $(n,m) = (20, 20)$ with i.i.d. $\mathcal{U}(0,1)$ valuations, $(20,20)$ with i.i.d. truncated normal valuations, and $(40, 20)$ with i.i.d. truncated normal ($\mu = 5, \sigma^2 = 4$ truncated into $[0,10]$) valuations.
	
	\subsection{Preconditioned PDA and RPDA with IIAS} \label{app:precond-rpda-cfr}
	For both PDA and RPDA, we use the same parameters as described in Appendix \ref{app:details-game-exper} and the preconditioner construction in \cite[\S Lemma 2]{pock2011diagonal}. In theory, this does not alter the cost per treeplex projection \cite[Theorem 5]{gilpin2012first}. The preconditioned versions are named PPD and PRPD, respectively. As can be seen, preconditioning brings additional speedup to the original algorithms using unweighted Euclidean norms. In particular, PRPD with various IIAS can catch up with CFR$^+$ toward the end of $T=4000$ iterations.
	
		\begin{figure}[htp]
		\begin{center}
			\includegraphics[width=0.48\columnwidth]{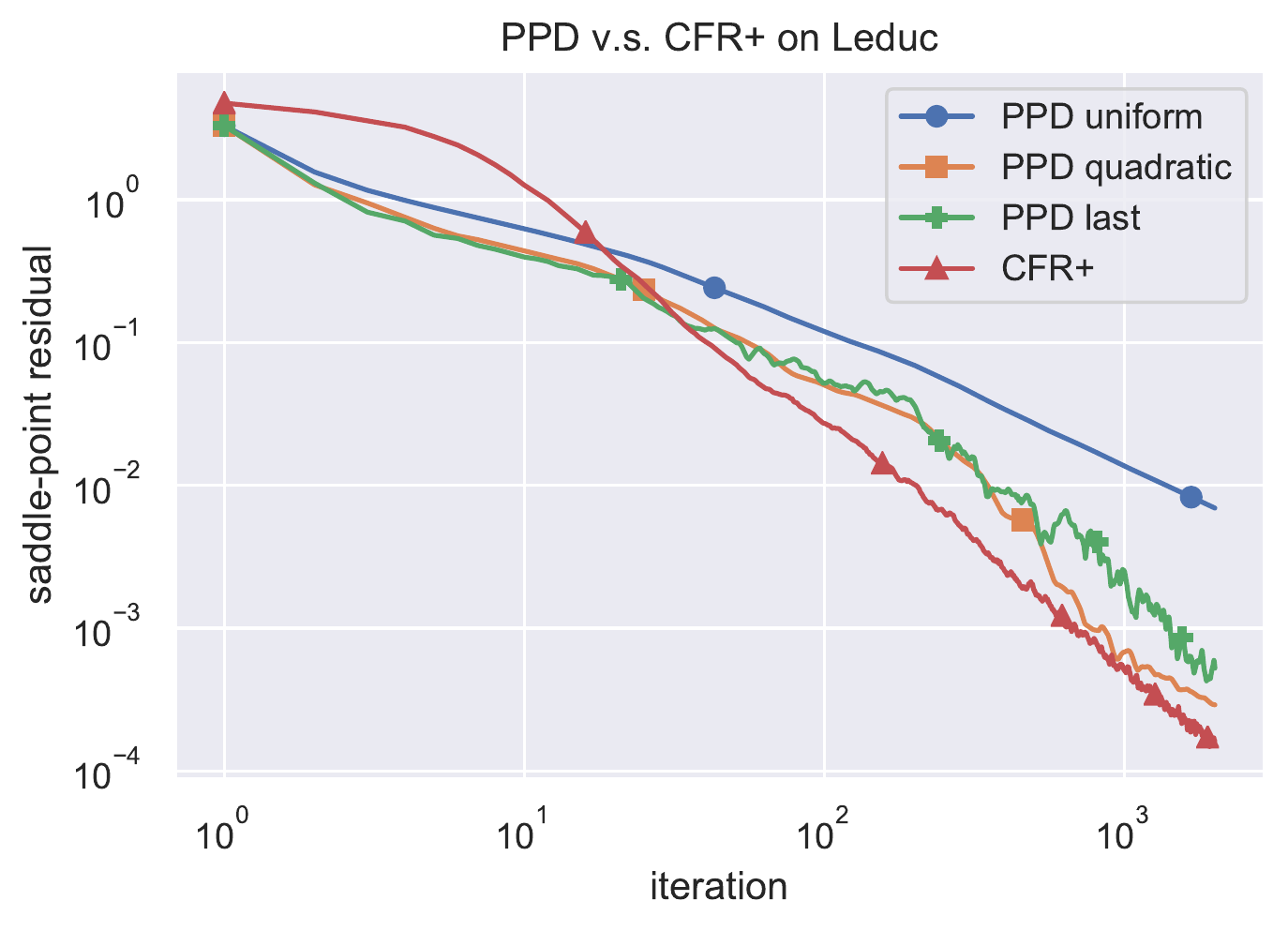} \includegraphics[width=0.48\columnwidth]{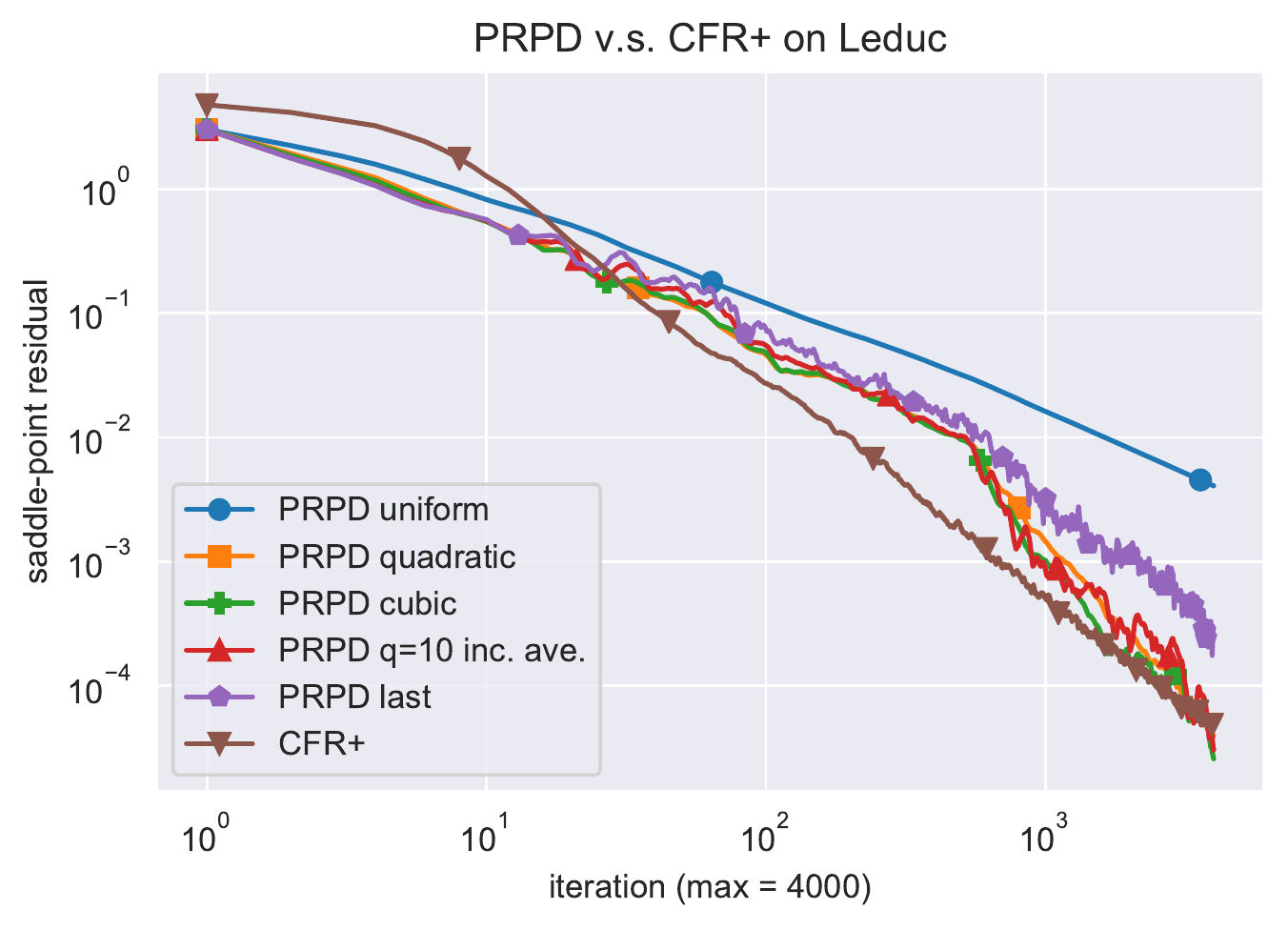}
			\caption{PPD and PRPD for EFG}
			\label{fig:precond-pda-rpda-cfr}
		\end{center}
	\end{figure}

	\subsection{Details of the TV-$\ell_1$ minimization experiments}\label{app:reconstr-images}
	Figure \ref{fig:images} displays the original, corrupted, and reconstructed images - Cameraman, Pirates and Bingley. The reconstruction is based on the quadratic averages of the PDA iterates at $T=1000$.
	
	\begin{figure}[htp]
		\begin{center}
			\minipage{0.3\textwidth}
			\includegraphics[width=\linewidth]{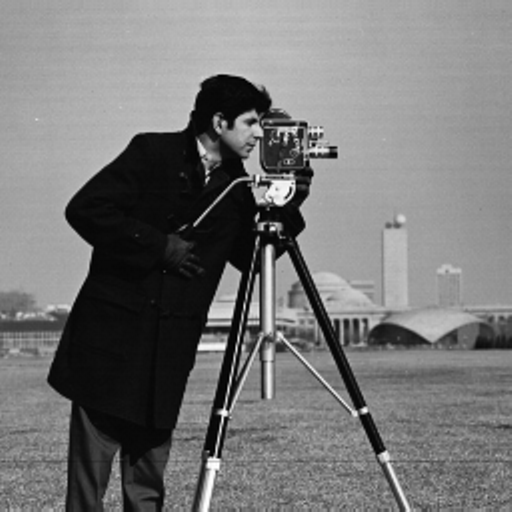} 
			\endminipage
			\minipage{0.3\textwidth}
			\includegraphics[width=\linewidth]{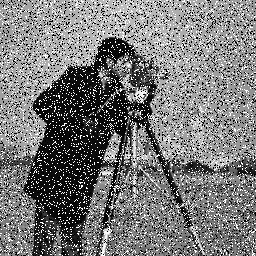} 
			\endminipage
			\minipage{0.3\textwidth}
			\includegraphics[width=\linewidth]{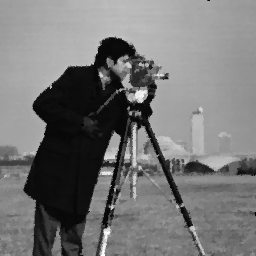} 
			\endminipage\hfill
			\minipage{0.3\textwidth}
			\includegraphics[width=\linewidth]{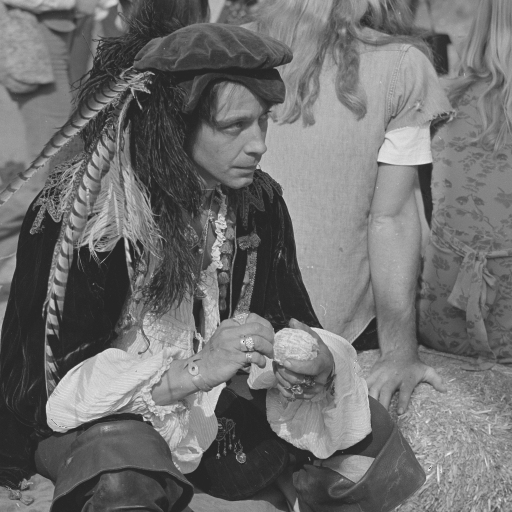} 
			\endminipage
			\minipage{0.3\textwidth}
			\includegraphics[width=\linewidth]{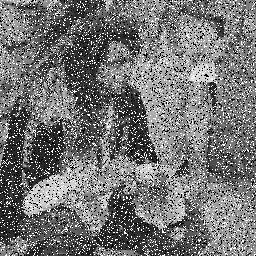} 
			\endminipage
			\minipage{0.3\textwidth}
			\includegraphics[width=\linewidth]{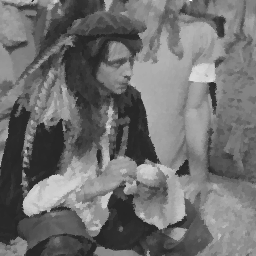} 
			\endminipage\hfill
			\minipage{0.3\textwidth}
			\includegraphics[width=\linewidth]{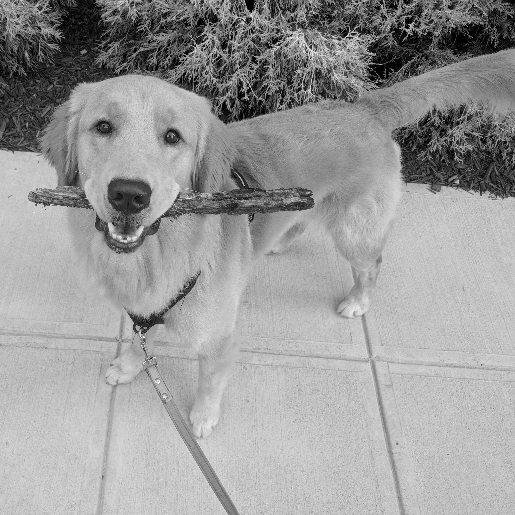} 
			\endminipage
			\minipage{0.3\textwidth}
			\includegraphics[width=\linewidth]{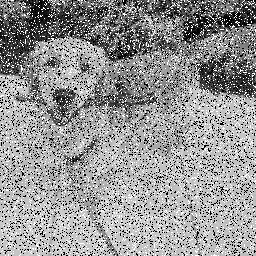} 
			\endminipage
			\minipage{0.3\textwidth}
			\includegraphics[width=\linewidth]{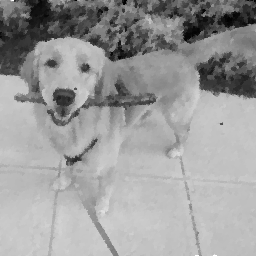} 
			\endminipage
			\caption{Original (left), corrupted (middle) and reconstructed images (right) of Cameraman (top), Pirate (middle), and Bingley (bottom)}
			\label{fig:images}
		\end{center}
		\vskip -0.5in
	\end{figure}
	
\end{document}